\title{\bf\LARGE Improved Bound for Robust Causal Bandits with Linear Models}
\author{%
 \IEEEauthorblockN{Zirui Yan \qquad Arpan Mukherjee \qquad Burak Var\i c\i \qquad  Ali Tajer \thanks{This work was supported in part by the U.S. National Science Foundation under Grant ECCS-1933107, and in part by the Rensselaer-IBM AI Research Collaboration (AIRC), part of the IBM AI Horizons Network.} }
 \IEEEauthorblockA{Department of Electrical, Computer, and Systems Engineering\\
                   Rensselaer Polytechnic Institute\\
                   Troy, NY, USA\\
                   }
    
}
\newtheorem{theorem}{Theorem}
\newtheorem{lemma}[theorem]{Lemma}
\newcommand{\algonameLinSEMUCB}{{\rm LinSEM-UCB}}
\newcommand{\algonameRWLinSEMUCB}{{\rm Robust-LCB}}
\begin{document}

\maketitle

\begin{abstract}
This paper investigates the {\em robustness} of causal bandits (CBs) in the face of temporal model fluctuations. This setting deviates from the existing literature's widely-adopted assumption of constant causal models. 
The focus is on causal systems with linear structural equation models (SEMs). The SEMs and the \emph{time-varying} pre- and post-interventional statistical models are all \emph{unknown} and subject to variations over time. The goal is to design a sequence of interventions that incur the smallest cumulative regret compared to an oracle aware of the entire causal model and its fluctuations. A robust CB algorithm is proposed, and its cumulative regret is analyzed by establishing both upper and lower bounds on the regret. It is shown that in a graph with maximum in-degree $d$, length of the largest causal path $L$, and an aggregate model deviation $C$, the regret is upper bounded by $\tilde{\mathcal{O}}(d^{L-\frac{1}{2}}(\sqrt{T} + C))$ and lower bounded by $\Omega(d^{\frac{L}{2}-2}\max\{\sqrt{T}\; ,\; d^2C\})$. The proposed algorithm achieves nearly optimal $\tilde\mcO(\sqrt{T})$ regret when $C$ is $o(\sqrt{T})$, maintaining sub-linear regret for a broad range of $C$.
\end{abstract}

%%%%%%%%%%%%%%%%%%%%%%%%%%%%%%%%%%%
% Motivation and Overview
%%%%%%%%%%%%%%%%%%%%%%%%%%%%%%%%%%%
\section{Motivation and Overview}

In causal systems, 
\emph{interventions} are experimental mechanisms that facilitate uncovering the cause-effect relationships in the networks and distinguishing them from the conventional association measures \cite{pearl2009causality}. Sequential design of interventions is an analytical framework for designing these interventions in a data-adaptive fashion, resulting in an overall reduced experiment cost and a faster process for forming inferential decisions. In this paper, we investigate the problem of causal bandits (CB), which is an experimental design framework that models each intervention as an arm of a multi-armed bandit instance. The goal is to maximize a notion of utility for the causal network. As customary in the causal bandit literature, the utility is chosen as the average value of a leaf node, which we denote as the \textit{reward node}. 

\vspace{.08 in}
\noindent \textbf{Motivation.} We investigate CBs from a new perspective. The existing studies all focus on having a \emph{fixed causal model over time}. In reality, however, large complex causal systems undergo model fluctuations caused by a wide range of reasons such as non-stationarity in the system or heterogeneous data~\cite{huang2017behind, zhang2017causalA}, 
measurement errors~\cite{zhang2017causalB}, selection bias~\cite{zhang2016identifiability}, and missing data~\cite{tu2019causal}. Temporal model fluctuations can change the causal structure or the statistical models. For instance, in drug discovery, there are multiple observable variables or representation nodes~\cite{li2017learning}, and the model fluctuations due to measurement errors can occur in both the observable variables or their weights to the representation nodes. However, algorithms designed for time-invariant settings can be highly susceptible to model fluctuations. For instance, the CB algorithm in \cite{varici2022causal} enjoys a nearly optimal growth in the horizon $T$, i.e., $\mcO(\sqrt{T})$. Nevertheless, it deviates from the optimal rate even in the presence of small model discrepancies, as noted in~\cite{yan2023robust}.  More specifically, the regret of that becomes linear in $T$ if the system experiences model deviations in $T^{\frac{1}{2L}}$ instances, where $L$ is the longest causal path in the graph. Even for small values of $L$, $T^{\frac{1}{2L}}$ will be an extremely small fraction of the instances. For instance, when $T=10^5$ and $L=3$, model deviations in $6$ out of $10^5$ instances result in a linear regret.

\vspace{.08 in}
\noindent \textbf{Contributions.} Our contributions in this paper are three-fold. (1) Under relevant measures of model fluctuations, we design a robust CB algorithm to model deviations over time. (2) We characterize almost-matching upper and lower bounds on the regret as a function of model deviation level, time horizon, and graph parameters. 
Specifically, when considering a \emph{known} budget $C$ that captures the level of model deviations over time, the achievable regret is $\tilde{\mathcal{O}}(d^{L-\frac{1}{2}}(\sqrt{T} + C))$, where $d$ is the maximum in-degree in the graph and $L$ is the length of the largest causal path. The established lower bound is of the order $\Omega(d^{\frac{L}{2}-2}\max\{\sqrt{T}\; , \; d^2C\})$, which has a similar scaling behavior as the upper bound in terms of the graph parameters and the horizon. Note that our algorithm circumvents the exponential growth of the achievable regret with the cardinality of the intervention space, which is exponential in the number of nodes in the graph. (3) Our regret upper bound is tighter than that of \cite{yan2023robust}, which is $\tilde \mcO(d^{L-\frac{1}{2}}\sqrt{NT}+NC)$. When there is no deviation, our regret reduces to $\tilde \mcO(d^{L-\frac{1}{2}}\sqrt{T})$, which is also tighter than the bound in \cite{varici2022causal} by a $\sqrt{N}$ factor.

%%%%%%%%%%%%%%%%%%%%%%%%%%%%%%%%
%%%%%%%%%%%%%%%% Related work
%%%%%%%%%%%%%%%%%%%%%%%%%%%%%%%%
\vspace{.07 in}
\noindent \textbf{Related literature. } The extent of information available about the causal model critically influences the design of CB algorithms. Broadly, there are two central pieces of information: the causal structure (topology) and the data's pre- and post-intervention statistical models. The earlier studies on CB algorithms assume that both the graph structure and the interventional distributions are known (fully or partially)~\cite{lattimore2016causal,bareinboim2015bandits,sen17,lu2020regret,nair2021budgeted,wei2023approximate}. More recent studies dispense with one or both of the assumptions. Investigations dropping both assumptions include~\cite{lu2021causal,de2022causal,bilodeau2022adaptively,feng2023combinatorialwioutgraph,konobeev2023causal,malek2023additive}. However, these investigations either lack regret guarantees or their regret scales with the cardinality of the action set. Besides,~\cite{bilodeau2022adaptively} investigates the setting in which the distributions are known, while the graph topology is unknown. In this paper, we focus on the setting where the graph structure is known while the distributions are unknown. Prior studies on this setting include~\cite{yabe18causal,maiti2022causal,xiong2022pure,feng2023combinatorial,sawarni2023learning}, which focus on binary random variables. More recently, \cite{varici2022causal} focuses on linear systems and generalizes the results to soft interventions, continuous random variables, and arbitrarily large intervention spaces. In parallel, \cite{sussex2022model} uses soft interventions and generalizes to non-linear models but limits to the Gaussian process SEMs in reproducing kernel Hilbert space (RKHS) and intervention space on controllable action variables. Finally, we note that even though we focus on linear SEMs, our reward is a non-linear function of the unknown parameters. Hence, our CB model fundamentally differs from linear bandits. This is the case even in the CB settings with a fixed model~\cite{varici2022causal}. Nevertheless, we briefly comment on the literature on linear bandits with model misspecification or contamination. These studies assume fixed (permanent) deviation, whereas, in our setting, the deviations can vary over time~\cite{ghosh2017misspecified,lattimore2020learning,foster2020adapting,krishnamurthy2021adapting}. Furthermore, in linear bandits with contamination, the contamination is imposed on the observed rewards~\cite{li2019stochastic,bogunovic2021stochastic,lee2021achieving,zhao2021linear,wei2022model,he2022nearly}, whereas we focus on model deviation.

%%%%%%%%%%%%%%%% Notations
\vspace{.08 in}
\noindent \textbf{Notations.}
For $N\in \Z_{+}$, we define the set $[N]\triangleq\{1,\cdots,N\}$. The Euclidean norm of a vector $\bX\in\R^N$ is denoted by $\norm{\bX}$. For a subset $\mcS \subseteq [N]$, we define $\bX_{\mcS} \triangleq \bX\odot\mathbf{1}({\mcS})$, where $\odot$ denotes the Hadamard product and the vector $\mathbf{1}({\mcS})\in \{0,1\}^N$ has $1$s at the indices corresponding to $\mcS$. We denote the $i$-th column of matrix $\bA\in\R^{m\times n}$ by $[\bA]_i$, and the entry at $i$-th row and $j$-th column by $[\bA]_{i,j}$. The spectral norm of a matrix is denoted by $\norm{\bA}$. We define the $\bA$-norm for positive semidefinite matrix $\bA$ as $\|\bX\|_{\bA}=\sqrt{\bX^{\top}\bA\bX}$.

\section{Causal Bandit Model}
\label{sec:intro}
\noindent \textbf{Causal graphical model.} Consider a directed acyclic graph (DAG) denoted by $\mathcal{G}(\mathcal{V},\mathcal{E})$, where $\mathcal{V}=[N]$ denotes the set of nodes, and $\mathcal{E}$ denotes the set of edges, and the ordered tuple $(i,j)\in\mathcal{E}$ specifies a directed edge from $i$ to $j$. Each node $i\in[N]$ is associated with a random variable $X_i$. Accordingly, we define the vector $\mathbf{X}\triangleq [X_1,\cdots,X_N]^\top$. We consider a {\em linear} SEM, according to  which
\begin{equation}
\label{eq:linear model}
    \bX=\bB^{\top} \bX+\bepsilon \ ,
\end{equation}
where $\bB\in\R^{N\times N}$ is a strictly upper-triangular edge weight matrix, and $\bepsilon\triangleq (\epsilon_1,\cdots,\epsilon_N)^\top$ denotes the exogenous noise variables, with a known mean $\bnu \triangleq \E[\bepsilon]$. The noise vector $\bepsilon$ is $1$-sub-Gaussian, and its Euclidean norm is upper bounded by $\norm{\bepsilon}\leq m_{\bepsilon}$. The graph's structure is assumed to be \emph{known}, while the weight matrix $\bB$ associated with the graph is \emph{unknown}. For any node $i\in[N]$, we denote the set of parents of~$i$ by ${\rm pa}(i)$. We denote the maximum in-degree of the graph by $d \triangleq \max_i\{|{\rm pa}(i)|\}$ and the length of the longest directed path in the graph by $L$.

\vspace{.08 in}
\noindent\textbf{Intervention model.} We consider \emph{soft interventions} on the graph nodes. A soft intervention on node $i\in[N]$ alters the conditional distribution of $X_i$ given its parents $\bX_{\rm pa}(i)$, i.e., $\P(X_i|\bX_{\rm pa}(i))$. An intervention can be applied to a subset of nodes simultaneously. If node $i\in\mcV$ is intervened, the impact of the intervention is a change in the weights of the edges incident on node $i$. These weights are embedded in $[\bB]_i$, i.e., the $i$-th column of $\bB$. We denote the post-intervention weight values by $[\bB^*]_i\neq [\bB]_i$. Accordingly, corresponding to the interventional weights, we define the interventional weight matrix $\bB^*$, composed of the columns $\{[\bB^*]_i:i\in[N]\}$. Note that soft interventions subsume commonly used stochastic \emph{hard} interventions in which a hard intervention on node $i$ sets $[\bB^*]_i=\boldsymbol{0}$.

Since we allow any arbitrary combination of nodes to be selected for concurrent intervention, there exist~$2^N$ interventional actions to choose from. We define $\mcA\;\triangleq\; 2^{\mcV}$ as the set of all possible interventions, i.e., all possible subsets of $[N]$. For any intervention $a\in\mcA$, we define the post-intervention weight matrix $\bB_a$ such that columns corresponding to the non-intervened nodes retain their observational values from $\bB$, and the columns corresponding to the intervened nodes change to their new interventional values from $\bB^*$. The columns of $\bB_a$ are specified by
\begin{equation}
\label{eq:Ba_construct}
    [\bB_a]_i\;\triangleq\;[\bB]_i\cdot\mathbbm{1}{\{i\notin a\}} + [\bB^*]_i\cdot\mathbbm{1}{\{i\in a\}}\ , 
\end{equation}
where $\mathbbm{1}$ denotes the indicator function. The interventions change the probability models of $\bX$. We define~$\P_a$ as the probability measure of $\bX$ under intervention $a\in\mcA$. For any given $\bB$ and $\bB^*$ we assume that $\norm{[\bB_a]_i} \leq m_B$ for all $a\in\mcA$. Without loss of generality, we assume $m_B = 1$. Due to the boundedness of noise $\bepsilon$ and column of $\bB_a$ matrices, there exists $m\in \R^+$ such that $\norm{\bX}\leq m$.

\vspace{.08 in}
\noindent\textbf{Causal bandit model.} Our objective is the sequential design of interventions. The set of possible interventions can be modeled as a multi-armed bandit setting with $2^N$ arms, each arm corresponding to each possible intervention. Following the canonical CB model~\cite{lattimore2016causal,lu2020regret}, we designate node $N$ (i.e., the node without a descendant) as the \emph{reward} node. Accordingly, $X_N$ specifies the reward value. We denote the expected reward collected under intervention $a\in\mcA$ by
\begin{equation}
\label{equ:mua}
    \mu_a \;\triangleq\; \E_a[X_N]\ ,
\end{equation}
where $\E_a$ denotes expectation under $\P_a$. We denote the intervention that yields the highest average reward by $a^* \triangleq \argmax_{a\in\mcA} \mu_a$; denote the sequence of interventions by $\{a(t)\in\mcA: t\in\N\}$; and denote the data generated at time $t$ and under intervention $a(t)$ by $\bX(t) = [X_1(t), \cdots, X_N(t)]^\top$. The learner's goal is to minimize the average cumulative regret over the time horizon $T$ with respect to the reward accumulated by an oracle aware of the systems model, interventional distributions, and model fluctuations. We define the expected accumulated regret as
\begin{equation}
    \E \left[R(T)\right]\;\triangleq\; T\mu_{a^*} - \sum\limits_{t=1}^T \E[X_N(t)]\ .
    \label{equ:regret}
\end{equation}
%%%%%%%%%Model Variations
\section{Temporal Model Fluctuations}
Due to the size and complexity of the graphical models that represent complex systems, assuming that the observational and interventional models $\bB$ and $\bB^*$ remain unchanged over time is a strong assumption. These models can undergo temporal variations due to various reasons, such as model misspecifications, stochastic behavior of the system, and adversarial influences. To account for such variations, we refer to $\bB_{a(t)}$ as the nominal model of the graph at time $t$ and denote the actual time-varying \emph{unknown} model by $\bD_{a(t)}$. Accordingly, we define the deviation of the actual model from the nominal model as
\begin{equation}
 \Delta_{a(t)}  \;\triangleq\; \bD_{a(t)} - \mathbf{B}_{a(t)} \ . 
\end{equation}
Clearly, if the model of node $i$ undergoes deviation at time $t$ under intervention $a$, we have $\big\|\left[\Delta_{a(t)}\right]_i\big\|\neq 0$. 

We consider the aggregate deviation measure that quantifies the aggregate deviation over time and provides a budget for the maximum deviation in the linear model that model deviations can inflict over time. Specifically, we define the maximum aggregate deviation as
\begin{equation}
C \;\triangleq\; \max_{i\in[N]} \sum_{t=1}^T\max_{a(t)\in\mathcal{A}}\left\|\left[\Delta_{a(t)}(t)\right]_i\right\|\ .
\end{equation} 
This measure of deviation is standard in stochastic bandits~\cite{he2022nearly}, where the deviation budget is defined as the maximum deviation in the reward that the adversary can inflict over time.   We assume that the model deviation budgets specified by $C$ are known to the learner, allowing the CB algorithm to adapt to the varying levels of model deviation.

%%%%%%%%%%%%%% Algorithms
\section{\algonameRWLinSEMUCB{}~Algorithm}
\label{sec:algo}

In this section, we present the details of our algorithm and provide the performance guarantee in Section~\ref{sec:regret}. 

%%%%%%%%%%%%%% RW-LinSEMUCB
\vspace{.08 in}
\noindent\textbf{Algorithm overview.} Identifying the best intervention hinges on determining the distributions $\{\P_a:a\in\mcA\}$ that maximizes the expected reward. Nevertheless, these $2^N$ distributions are unknown. Therefore, a direct approach entails estimating these probability distributions, the complexity of which grows exponentially with $N$. To circumvent this, we leverage the fact that specifying these distributions has redundancies since all depend on the observational and interventional matrices $\bB$ and $\bB^*$. These matrices can be fully specified by $2Nd$ scalars, where $d$ is the maximum degree of the causal graph. Hence, at its core, our proposed approach aims to estimate these two matrices. 

We design an algorithm that has two intertwined key objectives. One pertains to the \emph{robust estimation} of matrices $\bB$ and $\bB^*$ when the observations are generated by the non-nominal models. For this purpose, we design a weighted ordinary least squares (W-OLS) estimator. The structure of the estimator and the associated confidence ellipsoids for the estimates are designed to circumvent model deviations effectively. The second objective is designing a decision rule for the sequential selection of the interventions over time. This sequential selection, naturally, is modeled as a multi-armed bandit problem. Therefore, we design an upper confidence bound (UCB)--based algorithm for the sequential selection of the interventions over time. Next, we present the details of the {\bf Robust} {\bf L}inear {\bf C}ausal {\bf B}andit (\algonameRWLinSEMUCB). The steps involved in this algorithm are summarized in Algorithm~\ref{alg:weighted_ucb_algorithm}.

\begin{algorithm}[ht]
\caption{\algonameRWLinSEMUCB}
\label{alg:weighted_ucb_algorithm}
\begin{algorithmic}[1]
   \STATE {\bfseries Input:} Horizon $T$, causal graph $\mcG$, action set $\mcA$, mean noise vector $\bnu$, deviation budget  $C$
   \STATE {\bfseries Initialization}: Initialize 
   $\bB(0) = \bB^{*}(0) =\mathbf{0}_{N \times N}$ and $\bV_{i}(0)= \bV^{*}_{i}(0) = \bI_N, \ \forall i \in [N] $. 
   \FOR{$t = 1, 2,\ldots,T$}
   \STATE Compute ${\rm UCB}_a(t)$ according to \eqref{eq:ucb_definition} for $a\in \mcA$.
   \STATE Pull $a(t) = \argmax_{a \in \mcA} {\rm UCB}_a(t)$ and observe $\bX(t)\!=\!(X_1(t),\dots, X_N(t))^\top$. 
   \FOR{$i \in \{1,\dots,N\}$}
        \STATE Set $w_{i}(t)$ as \eqref{equ:weightsAC}, update $[\bB(t)]_{i}$ according to \eqref{eq:estimate_obs_weighted} and update $[\bB^{*}(t)]_{i}$ according to \eqref{eq:estimate_int_weighted}.
   \ENDFOR
   \ENDFOR
\end{algorithmic}
\end{algorithm}

\vspace{.1 in}
\noindent\textbf{Countering model deviations.} Our approach to circumventing model deviations is to identify and filter out the samples generated by the non-nominal models. We refer to these samples as \emph{outlier samples}. This facilitates forming estimates for $\bB$ and $\bB^*$ based on the samples generated by the nominal models. Since the model deviations may happen on multiple nodes simultaneously, the \algonameRWLinSEMUCB\ is designed to identify the nodes undergoing deviation over time and discard the outlier samples generated by these nodes. Such filtration is implemented via assigning time-varying and data-adaptive weights to different nodes such that the weight assigned to node $i\in[N]$ at time $t\in \N$ balance two factors: the probability of node $i\in[N]$ undergoing deviation at $t$ and the contribution of that sample to the estimator. These weights, subsequently, control how the samples from different nodes contribute to estimating $\bB$ and $\bB^*$.

\vspace{.08 in}
\noindent\textbf{Robust estimation.} We design the {\em weighted} ordinary least squares (OLS) estimators for $\bB$ and $\bB^*$, which at time $t\in\N$ are denoted by $\bB(t)$ and $\bB^*(t)$, respectively. To estimate the observational weights $[\bB]_i$, we use the samples from instances at which node $i$ is not intervened. Conversely, to estimate the interventional weights $[\bB^*]_i$, we use the samples from the instances at which node $i$ is intervened. By defining $\{w_i(t)\in\R_+:i\in[N]\}$ as the set of weights assigned to the nodes at time $t\in\N$, $i$-th columns of these estimates are specified as follows. 
\begin{align}
 [\bB(t)]_{i} &  \triangleq  [\bV_{i}(t)]^{-1} \!\!\!\!\!\!  \sum_{s \in [t] : i \notin a(s)} \!\!\!\!\! w_i(s)\bX_{\Pa(i)}(s) (X_i(s)-\nu_i) \ , 
\label{eq:estimate_obs_weighted} \\
\hspace{-0.2in}[\bB^{*}(t)]_i &  \triangleq [\bV^{*}_{i}(t)]^{-1}  \!\!\!\!\!\! \sum_{s \in [t] : i \in a(s)} \!\!\!\!\!  w_i(s)\bX_{\Pa(i)}(s) (X_i(s)-\nu_i) \ , \label{eq:estimate_int_weighted}
\end{align}
where we have defined the \emph{weighted Gram matrices} as
\begin{align}
\bV_{i}(t) & \;\triangleq \sum_{s \in [t] : i \notin a(s)} w_i(s) \bX_{\Pa(i)}(s) \bX_{\Pa(i)}^\top(s) + \bI_N \ , \label{eq:define_V_obs_weighted} \\
\bV^{*}_{i}(t) & \;\triangleq\sum_{s \in [t] : i \in a(s)} w_i(s) \bX_{\Pa(i)}(s) \bX_{\Pa(i)}^\top(s)+\bI_N \  . \label{eq:define_V_int_weighted}
\end{align}
Furthermore, we define the matrices associated with the squared weights as
\begin{align}
\widetilde{\bV}_{i}(t) & \;\triangleq \sum_{s \in [t] : i \notin a(s)} w_i^2(s) \bX_{\Pa(i)}(s) \bX_{\Pa(i)}^\top(s) + \bI_N \ , \label{eq:define_V_obs_weighted_tilde} \\
\widetilde{\bV}^{*}_{i}(t) & \;\triangleq\sum_{s \in [t] : i \in a(s)} w_i^2 (s)\bX_{\Pa(i)}(s) \bX_{\Pa(i)}^\top(s)+\bI_N \  . \label{eq:define_V_int_weighted_tilde}
\end{align}
Similarly to \eqref{eq:Ba_construct}, we denote the relevant and Gram matrices for node $i$ under intervention $a \in \mcA$ by
\begin{align}
     \widetilde{\bV}_{i,a}(t) &\;\triangleq \;\mathbbm{1}{\{i \in a\}} \widetilde{\bV}^{*}_{i}(t) +  \mathbbm{1}{\{i \notin a\}} \widetilde{\bV}_{i}(t) \ . \label{eq:V_ita(t)ilde}
\end{align}

\vspace{.04 in}
\noindent\textbf{Confidence ellipsoids.} After performing estimation in each round, we construct the confidence ellipsoids for the OLS estimators $\{\mcC_{i}(t):i\in[N]\}$ for the observational weights and $\{\mcC^*_{i}(t):i\in[N]\}$ for the interventional weights 
\begin{align}
    &\hspace{-0.1 in}\mcC_{i}(t)  \; \triangleq \; \Big\{\theta \in \mcB_1  : \label{eq:conf_obs} \\
    &\quad \big\|\theta - [\bB(t-1)]_{i}\big\|_{\bV_i(t-1) [\widetilde{\bV}_i(t-1)]^{^{-1}} \bV_i(t-1)}  \leq \beta_{t} \Big\}  ,  \nonumber \\
    &\hspace{-0.1 in} \mcC^*_{i}(t)  \;\triangleq\; \Big\{ \theta \in \mcB_1: \label{eq:conf_int} \\ 
    &\quad \big\|\theta - [\bB^{*}(t-1)]_{i}\big\|_{\bV^*_i(t-1) [\widetilde{\bV}^*_i(t-1)]^{^{-1}} \bV^*_i(t-1)} \leq \beta_{t} \Big\}  , \nonumber
\end{align}
where $\mcB_1$ is the unit ball in $\R^N$ and $\{\beta_{t}\in\R_+, t\in\N\}$ is a sequence of non-decreasing confidence radii that control the size of the confidence ellipsoids, which we will specify. Accordingly, we define the relevant confidence ellipsoid for node $i$ under intervention $a\in\mcA$ as 
\begin{equation}
    \mcC_{i,a}(t) \;\triangleq\;  \mathbbm{1}{\{i \in a\}}\  \mcC^{*}_{i}(t) + \mathbbm{1}{\{i \notin a\}}\  \mcC_{i}(t) \ . \label{eq:conf_C_iat}
\end{equation}

\vspace{.04 in}
\noindent\textbf{Weight designs.} Designing the weights $\{w_i(t):i\in [N] \}$ at time $
t$ is instrumental in effectively winnowing out the outlier samples. We select the weights that bring the confidence radius $\beta(t)$ down to nearly constant
\begin{equation}
    w_i(t)\;\triangleq\; \min\bigg\{\frac{1}{C} \; , \; \frac{1}{C\norm{\bX_{\Pa(i)}(t)}_{[\widetilde{\bV}_{i,a(t)}(t)]^{^{-1}}}}\bigg\}\ ,\label{equ:weightsAC}
\end{equation}
where the weights are inversely proportional to the norm $\|\bX_{\Pa(i)}(t)\|_{[\widetilde{\bV}_{i,a(t)}(t)]^{^{-1}}}$ and deviation budget $C$ , and they are truncated at $1/C$, which ensures that the weights are not arbitrarily large. We refer the term $\|\bX_{\Pa(i)}(t)\|_{[\widetilde{\bV}_{i,a(t)}(t)]^{^{-1}}}$ as {\em weighted exploration bonus}. A higher exploration bonus means lower confidence in the sample. Setting the weights as the inverse of the exploration bonus avoids potentially significant regret caused by both the stochastic noise and model deviations. We scale the weights proportional to $1/C$ to use smaller weights when the model deviation level is higher.

\vspace{.08 in}
\noindent\textbf{Intervention selection.} We adopt a UCB-based rule for sequentially selecting the interventions. 
Specifically, at each time $t$, our algorithm selects the intervention that maximizes a UCB, defined as the maximum value of expected reward when the edge weights are in the confidence ellipsoids $\{\mcC_{i,a}(t),i\in[N]\}$, under that intervention. Due to the linear structure, for any intervention $a\in\mcA$, the UCB is defined as
\begin{equation}
\label{eq:ucb_definition}
    {\rm UCB}_a(t) \;\triangleq \max_{\{\forall i\in[N]: [\Theta]_i\in \mcC_{i,a}(t)\}}\inner{f(\Theta)}{\bnu }\ ,
\end{equation}
where we define $f(\Theta)\triangleq \sum_{\ell = 0}^L [\Theta^\ell]_N$ and $\Theta^{\ell}$ is the $\ell$-th power of matrix $\Theta$.

Based on the UCB in~\eqref{eq:ucb_definition}, at time $t$, our algorithm selects the intervention that maximizes the UCB, 
\begin{equation}
    a(t)\;=\;\argmax_{a \in \mcA} {\rm UCB}_a(t)\ .
\end{equation}

%%%%%%%%%%%%%% %%%%%%%%%%%%%% 
%%%%%%%%%%%%%% Regret analysis
%%%%%%%%%%%%%% %%%%%%%%%%%%%% 
\section{Regret Analysis}
\label{sec:regret}
In this section, we present the performance guarantees for the proposed \algonameRWLinSEMUCB~algorithm. To derive the upper bound, we start by providing a concentration bound for the W-OLS estimator. Notably, we investigate a vector norm that differs from existing work in robust bandits. This norm was first investigated in \cite{russac2019weighted} under the non-stationary setting, and our investigation builds on this to provide novel insights into the robust behavior of the W-OLS.

\begin{lemma}[Estimator concentration]\cite[Lemma 2]{yan2023robust}
\label{lem:beta(t)indeviation}
Under a deviation budget $C$, with a probability at least $1-2\delta$, for any node $i\in[N]$ and $t\geq 0$, we have
\begin{align}
        \|[\bB(t)]_i-\bB_i\|_{\bV_i(t) [\widetilde{\bV}_i(t)]^{-1} \bV_i(t)}  \;&\leq\; \beta(t,\delta)\ ,\\
        \mbox{and}\quad\|[\bB^*(t)]_i-\bB^*_i\|_{\bV_i^*(t) [\widetilde{\bV}_i^*(t)]^{-1} \bV_i^*(t)} \;&\leq\; \beta(t,\delta)\ ,
    \end{align}
where $\beta(t,\delta) \triangleq \sqrt{2\log\left(1/\delta\right)+d\log\left(1+m^2t/dC^2\right)} + 1 + m$ denotes the confidence radius.
\end{lemma}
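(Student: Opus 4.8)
The plan is to adapt the standard self-normalized concentration argument for online least-squares (as in the analysis of linear UCB) to the \emph{weighted} estimator defined in \eqref{eq:estimate_obs_weighted}--\eqref{eq:define_V_int_weighted_tilde}, while carefully accounting for the bias injected by the outlier samples generated under the non-nominal model. I will focus on the observational estimate $[\bB(t)]_i$; the interventional case is identical with $i\notin a(s)$ replaced by $i\in a(s)$. First I would write the estimation error in the usual decomposition: starting from the SEM $X_i(s) = \bX_{\Pa(i)}^\top(s)\bB_i + \epsilon_i(s) + [\Delta_{a(s)}(s)]_i^\top \bX_{\Pa(i)}(s)$ on the samples actually observed, substitute into \eqref{eq:estimate_obs_weighted} to obtain
\begin{align}
[\bB(t)]_i - \bB_i \;=\; \bV_i(t)^{-1}\Big(&\textstyle\sum_{s:i\notin a(s)} w_i(s)\bX_{\Pa(i)}(s)\epsilon_i(s) \nonumber\\
&+\textstyle\sum_{s:i\notin a(s)} w_i(s)\bX_{\Pa(i)}(s)\bX_{\Pa(i)}^\top(s)[\Delta_{a(s)}(s)]_i \;-\;\bB_i\Big)\ . \nonumber
\end{align}
Taking the $\bV_i(t)[\widetilde{\bV}_i(t)]^{-1}\bV_i(t)$-norm and applying the triangle inequality splits the bound into a \emph{noise term}, a \emph{deviation term}, and a \emph{regularization term} $\|\bB_i\|_{[\widetilde{\bV}_i(t)]^{-1}}\le\|\bB_i\|\le m_B=1$.

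For the noise term I would control $\big\|\sum_s w_i(s)\bX_{\Pa(i)}(s)\epsilon_i(s)\big\|_{[\widetilde{\bV}_i(t)]^{-1}}$ by the self-normalized martingale tail bound applied to the weighted sum: since $\epsilon_i(s)$ is $1$-sub-Gaussian conditionally on the past and on $\bX_{\Pa(i)}(s)$, and the relevant predictable quadratic variation is exactly $\widetilde{\bV}_i(t)-\bI_N$ (which is why the squared-weight matrices \eqref{eq:define_V_obs_weighted_tilde} are the correct normalizer rather than $\bV_i(t)$), the method-of-mixtures argument of Abbasi-Yadkori et al.\ gives, with probability at least $1-\delta$, a bound of $\sqrt{2\log(1/\delta) + \log\det(\widetilde{\bV}_i(t))}$. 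The determinant is then bounded via the standard potential argument: $\log\det(\widetilde{\bV}_i(t))\le d\log(1 + \operatorname{tr}(\widetilde{\bV}_i(t)-\bI_N)/d)$, and because $w_i(s)\le 1/C$ and $\|\bX_{\Pa(i)}(s)\|\le m$, the trace is at most $m^2 t/C^2$, yielding the $d\log(1+m^2t/dC^2)$ term in $\beta(t,\delta)$. Union-bounding the observational and interventional statements over $i\in[N]$ is absorbed into the stated $1-2\delta$ (the per-node $\delta$ can be rescaled; I would follow the bookkeeping of \cite[Lemma 2]{yan2023robust} verbatim here).

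The deviation term is where the weight design \eqref{equ:weightsAC} earns its keep, and I expect this to be the main obstacle to state cleanly. Writing $D_i(t)\triangleq\sum_{s:i\notin a(s)} w_i(s)\bX_{\Pa(i)}(s)\bX_{\Pa(i)}^\top(s)[\Delta_{a(s)}(s)]_i$, I would bound $\|D_i(t)\|_{[\widetilde{\bV}_i(t)]^{-1}}\le \sum_s w_i(s)\,\|\bX_{\Pa(i)}(s)\|_{[\widetilde{\bV}_{i,a(s)}(s)]^{-1}}\,\|\bX_{\Pa(i)}(s)\|\,\|[\Delta_{a(s)}(s)]_i\|$, using that $[\widetilde{\bV}_i(t)]^{-1}\preceq[\widetilde{\bV}_{i,a(s)}(s)]^{-1}$ on the observational samples (monotonicity of the Gram matrices) together with Cauchy--Schwarz in the $[\widetilde{\bV}_{i,a(s)}(s)]^{-1}$ inner product. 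Now invoke the definition \eqref{equ:weightsAC}: either $w_i(s)=1/C$, in which case $w_i(s)\|\bX_{\Pa(i)}(s)\|_{[\widetilde{\bV}_{i,a(s)}(s)]^{-1}}\le 1/C$ is immediate (the norm is at most $\|\bX_{\Pa(i)}(s)\|\le m$ times the smallest eigenvalue bound, but one actually uses the truncation), or $w_i(s)=1/(C\|\bX_{\Pa(i)}(s)\|_{[\widetilde{\bV}_{i,a(s)}(s)]^{-1}})$, in which case $w_i(s)\|\bX_{\Pa(i)}(s)\|_{[\widetilde{\bV}_{i,a(s)}(s)]^{-1}}=1/C$ exactly. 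Either way each summand is at most $(m/C)\,\|[\Delta_{a(s)}(s)]_i\|$, so $\|D_i(t)\|_{[\widetilde{\bV}_i(t)]^{-1}}\le (m/C)\sum_s\|[\Delta_{a(s)}(s)]_i\|\le (m/C)\cdot\max_a\sum_s\|[\Delta_{a(s)}(s)]_i\|\le m$ by the definition of the budget $C$. Finally I would convert the $[\widetilde{\bV}_i(t)]^{-1}$-norm bounds on all three terms into the claimed $\bV_i(t)[\widetilde{\bV}_i(t)]^{-1}\bV_i(t)$-norm bound on $[\bB(t)]_i-\bB_i$ by observing $\|\bV_i(t)\,y\|_{[\widetilde{\bV}_i(t)]^{-1}} = \|y\|_{\bV_i(t)[\widetilde{\bV}_i(t)]^{-1}\bV_i(t)}$, which is exactly the identity that makes this particular norm the natural one for the weighted scheme; collecting the noise, deviation ($\le m$), and regularization ($\le 1$) contributions gives $\beta(t,\delta)=\sqrt{2\log(1/\delta)+d\log(1+m^2t/dC^2)}+1+m$. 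Since this is precisely \cite[Lemma 2]{yan2023robust}, in the paper I would simply cite it and refer the reader there for the details, but the sketch above is the argument I would reconstruct.
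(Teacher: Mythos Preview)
Your proposal is correct and, in fact, goes beyond what the paper does: the paper does not prove this lemma at all but simply imports it verbatim as \cite[Lemma~2]{yan2023robust}. Your reconstruction---decomposing $\bV_i(t)([\bB(t)]_i-\bB_i)$ into noise, deviation, and regularization pieces, handling the noise via the self-normalized bound with the squared-weight Gram matrix $\widetilde{\bV}_i(t)$ as the correct normalizer, and using the weight design \eqref{equ:weightsAC} together with monotonicity of $\widetilde{\bV}_i(\cdot)$ to collapse the deviation term to at most $m$---is exactly the argument underlying the cited result, and your closing remark that you would simply cite it matches the paper's treatment precisely. One minor bookkeeping point: the $1-2\delta$ in the lemma is per node (the factor $2$ covers the observational and interventional estimators for a fixed $i$); the union bound over $i\in[N]$ is performed later, in the proof of Theorem~\ref{thm:measure2}, by choosing $\delta=\tfrac{1}{2NT}$.
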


\begin{figure*}
\centering
    \begin{subfigure}{.31\textwidth}
        \centering
        \includegraphics[height=3.2 cm]{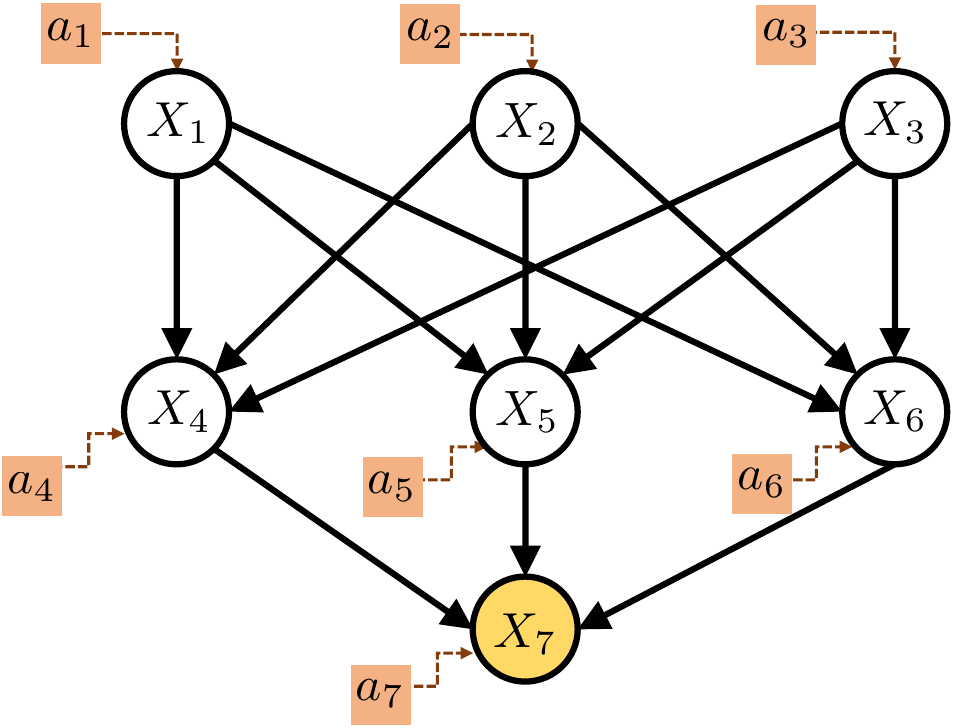}
        \caption{Hierarchical graph ($d=3$,$L=2$).}
        \label{fig:he_example}
    \end{subfigure}
    \begin{subfigure}{0.31\textwidth}
        \centering
        \includegraphics[height = 3.2 cm]{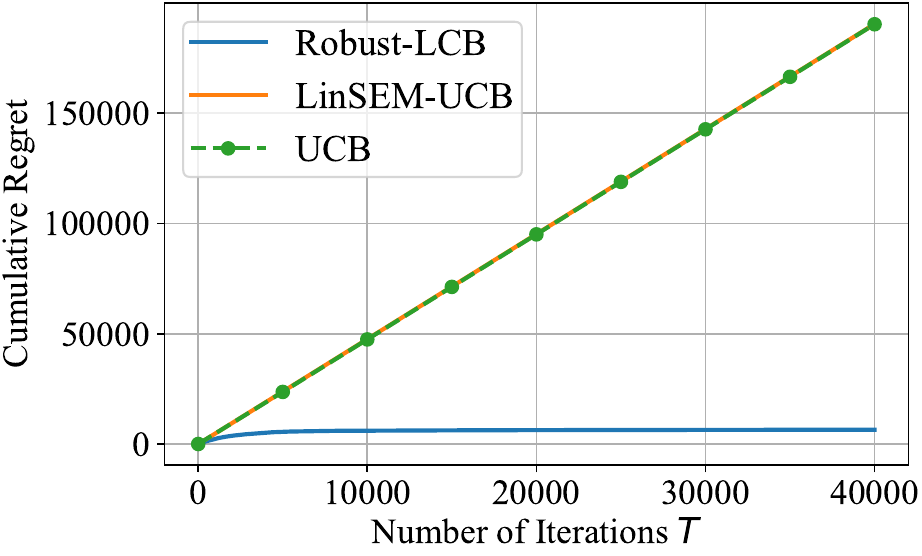}
        \caption{Cumulative regret when $C=\sqrt{T}$.}
        \label{fig:he_regret}
    \end{subfigure}
        \begin{subfigure}{0.31\textwidth}
        \centering
        \includegraphics[height=3.2 cm]{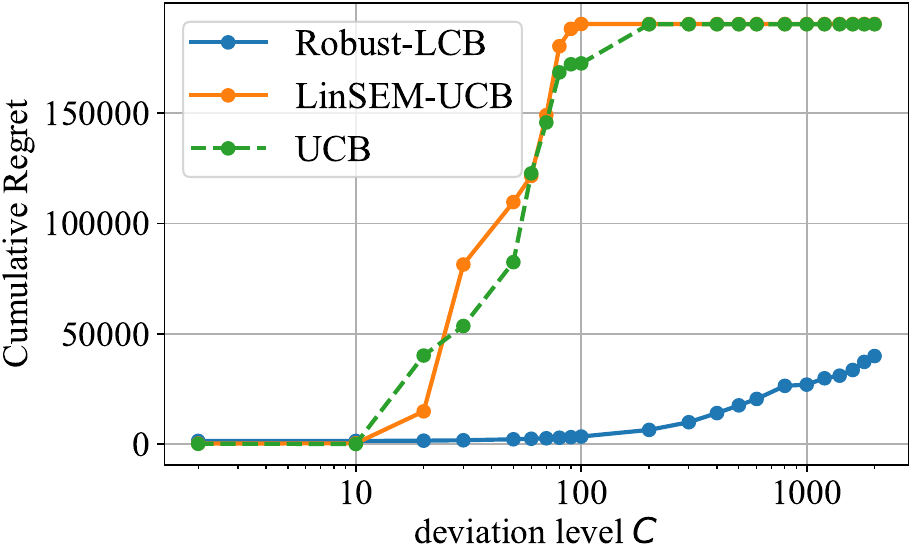}
        \caption{Cumulative regret at $T = 40000$.}
        \label{fig:he_C}
\end{subfigure}
\caption{Experiment result on hierarchical graph}
    \label{fig:diff_c}
\end{figure*}

This lemma offers high probability error bounds for estimators. Due to the causal structure, these errors accumulate and propagate along the causal paths, leading to the reward node $N$. Consequently, we analyze the compounding impacts of estimation errors and model deviations.
This analysis involves examining the eigenvalues of the weighted Gram matrices $\bV_{i,a(t)}(t)$ and $\widetilde{\bV}_{i,a(t)}(t)$. Finally, we result in the following regret upper bound. 

\begin{theorem}[Regret upper bound]
\label{thm:measure2}
 Under a deviation budget $C$, by setting $\delta=\frac{1}{2NT}$ and confidence radius $\beta(t,\delta)$ according to Lemma~\ref{lem:beta(t)indeviation}, the average cumulative regret of \algonameRWLinSEMUCB\  is upper bounded by
\begin{equation}
\label{equ:upperbound}
    \E \left[R(T)\right]\leq 2m+\tilde{\mathcal{O}}\left(d^{L-\frac{1}{2}}(\sqrt{T} + C)\right) \ .
\end{equation}
\end{theorem}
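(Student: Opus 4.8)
The plan is to control the per-round regret $\mu_{a^*} - \E[X_N(t)]$ by the standard UCB decomposition and then sum over $t$, carefully tracking how estimation errors propagate along causal paths of length up to $L$ and how the deviation budget $C$ enters. First I would invoke Lemma~\ref{lem:beta(t)indeviation}: on the high-probability event (with $\delta = \tfrac{1}{2NT}$, so the event holds for all $t\in[T]$ and $i\in[N]$ by a union bound), the true columns $[\bB]_i$ and $[\bB^*]_i$ lie in the confidence ellipsoids $\mcC_{i}(t)$ and $\mcC^*_i(t)$. This gives optimism: ${\rm UCB}_{a^*}(t) \geq \inner{f(\bB_{a^*})}{\bnu} = \mu_{a^*}$, so since $a(t)$ maximizes the UCB, $\mu_{a^*} \leq {\rm UCB}_{a(t)}(t)$. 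Hence the instantaneous regret is at most ${\rm UCB}_{a(t)}(t) - \mu_{a(t)}$ plus the gap between $\mu_{a(t)}$ and $\E[X_N(t)]$, where the latter gap is itself bounded by the model deviation at time $t$ and will contribute a term scaling with $C$ times the appropriate power of $d$ (each $[\Delta_{a(t)}(t)]_i$ perturbs $X_N$ through paths, multiplying in-degree factors, so a single node deviation propagates with a $d^{L-1}$-type amplification; summed over $t$ this is $\tilde\mcO(d^{L-1}C)$, absorbed into the claimed bound).

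Next I would bound ${\rm UCB}_{a(t)}(t) - \mu_{a(t)}$. Writing $\Theta^{(t)}$ for the maximizing parameter in~\eqref{eq:ucb_definition} for $a=a(t)$, both $[\Theta^{(t)}]_i$ and $[\bB_{a(t)}]_i$ lie in $\mcC_{i,a(t)}(t)$, so $\|[\Theta^{(t)}]_i - [\bB_{a(t)}]_i\|_{\bV_{i,a(t)}(t-1)[\widetilde{\bV}_{i,a(t)}(t-1)]^{-1}\bV_{i,a(t)}(t-1)} \leq 2\beta(t,\delta)$. The function $f(\Theta) = \sum_{\ell=0}^L [\Theta^\ell]_N$ is multilinear in the columns along causal paths, so $|\inner{f(\Theta^{(t)}) - f(\bB_{a(t)})}{\bnu}|$ can be expanded as a sum over paths from any source to node $N$ of products of column-wise discrepancies. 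The key per-column quantity is $\|[\Theta^{(t)}]_i - [\bB_{a(t)}]_i\|$, which I would relate to the weighted-norm bound: using $[\bV_{i,a(t)}(t-1)]^{-1}\preceq \bI$ does not suffice, so instead I would bound the effect of the $i$-th column discrepancy on $f$ by $\|\bX_{\Pa(i)}(t)\|_{[\widetilde{\bV}_{i,a(t)}(t-1)]^{-1}}$-type exploration bonuses after pairing with the data vector $\bX_{\Pa(i)}(t)$ — this is exactly the weighted exploration bonus appearing in~\eqref{equ:weightsAC}. The weight design forces $w_i(t)\cdot\|\bX_{\Pa(i)}(t)\|_{[\widetilde{\bV}_{i,a(t)}(t)]^{-1}} \leq 1/C$, i.e. it makes the weighted bonus essentially constant, which is what keeps $\beta(t,\delta)$ at $\tilde\mcO(1)$ in Lemma~\ref{lem:beta(t)indeviation}.

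The summation over $t$ is where the main work lies. For a fixed node $i$, I would split rounds by whether $w_i(t)$ equals $1/C$ (truncated) or the untruncated value; in the untruncated regime $\sum_t w_i(t)\|\bX_{\Pa(i)}(t)\|^2_{[\widetilde{\bV}_{i,a(t)}(t)]^{-1}}$ is controlled by the elliptic-potential / log-determinant lemma applied to the squared-weight Gram matrices $\widetilde{\bV}_{i,a(t)}$, giving an $\tilde\mcO(d)$ bound, while the truncated rounds each contribute $\tfrac1C$ and there can be at most $\mcO(C\sqrt{T})$-many of them in a way that yields the $\sqrt T$ scaling after Cauchy–Schwarz; combining, $\sum_{t=1}^T \|\bX_{\Pa(i)}(t)\|_{[\widetilde{\bV}_{i,a(t)}(t)]^{-1}} = \tilde\mcO(\sqrt{dT})$ roughly, and an extra $\sqrt d$ from $\beta$ and the in-degree of the sum gives the per-path factor. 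Multiplying through the causal-path expansion of $f$ contributes the remaining $d^{L-1}$ factor (there are up to $d^{\ell}$ paths of length $\ell\le L$ and each discrepancy term is controlled as above), producing the overall $\tilde\mcO(d^{L-\tfrac12}(\sqrt T + C))$; the additive $2m$ accounts for the first round or the failure event of probability $2\delta = 1/(NT)$ contributing $\mcO(m T \delta) = \mcO(m/N)$. The hardest part is the joint path-propagation-plus-weighted-potential bookkeeping: one must simultaneously (i) expand $f(\Theta^{(t)}) - f(\bB_{a(t)})$ into path sums without losing more than $d^{L}$ factors, (ii) convert each column discrepancy in the mixed weighted norm $\bV[\widetilde\bV]^{-1}\bV$ into an exploration bonus against the realized data $\bX_{\Pa(i)}(t)$, which requires a self-normalized/Cauchy–Schwarz step relating $\|\cdot\|_{\bV[\widetilde\bV]^{-1}\bV}$ to $\|\bX_{\Pa(i)}\|_{\widetilde\bV^{-1}}$, and (iii) handle the interaction between the $1/C$ truncation in the weights and the potential-function telescoping so that both the $\sqrt T$ and the $C$ terms emerge with the correct $d$-exponent; controlling the eigenvalues of $\bV_{i,a(t)}$ versus $\widetilde{\bV}_{i,a(t)}$ (the weights are $\le 1/C \le 1$, so $\widetilde{\bV}\preceq \bV$, but one needs a reverse-type comparison up to constants to close the loop) is the delicate point.
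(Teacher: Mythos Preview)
Your high-level skeleton --- define the good event via Lemma~\ref{lem:beta(t)indeviation} with $\delta=\tfrac{1}{2NT}$, pay $2m$ on its complement, use optimism to reduce the instantaneous regret to ${\rm UCB}_{a(t)}(t)-\mu_{a(t)}$, and propagate estimation error along causal paths to collect the $\sum_{\ell=1}^{L}d^{\ell-1}$ factor --- matches the paper, which packages the last step as an induction on causal depth (Lemma~\ref{lem:estimation error}).

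The genuine gap is in your summation step. After Cauchy--Schwarz against the confidence ellipsoid, the dual norm that appears is $\|\bX_{\Pa(i)}(t)\|_{[\bV\widetilde{\bV}^{-1}\bV]^{-1}}=\|\bX_{\Pa(i)}(t)\|_{\bV^{-1}\widetilde{\bV}\,\bV^{-1}}$, \emph{not} the weighted exploration bonus $\|\bX_{\Pa(i)}(t)\|_{\widetilde{\bV}^{-1}}$ from~\eqref{equ:weightsAC}; you flag the conversion as the ``delicate point'' but give no mechanism, and the elliptic-potential lemma does not telescope in the mixed $\bV^{-1}\widetilde{\bV}\,\bV^{-1}$ norm (the weight truncation only guarantees $w_i(t)\|\bX_{\Pa(i)}(t)\|_{\widetilde{\bV}^{-1}}\le 1/C$, a different object). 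The paper does \emph{not} attempt any potential-lemma argument. It bounds the dual norm crudely by $m\cdot\lambda_i(t)$ with $\lambda_i(t)\triangleq\sqrt{\lambda_{\max}(\widetilde{\bV}_{i,a(t)}(t))}\big/\lambda_{\min}(\bV_{i,a(t)}(t))$ and then controls $\sum_t\lambda_i(t)$ \emph{stochastically}: matrix-concentration bounds on the extreme singular values of the weighted data matrices (requiring the population second-moment matrices $\Sigma_{i,a}$ to satisfy $\kappa_{\min}>0$), after which $\lambda_i(t)$ is dominated by an explicit decreasing function $g(N_{i,a(t)}(t))$ of the visit count and the sum is handled by an integral. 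Both the $\sqrt{T}$ term and the $C$ term come out of that single integral (because the weights scale like $1/C$, one has $\lambda_{\min}(\bV)\sim N_{i,a}\kappa_{\min}/(Cm)$ versus $\lambda_{\max}(\widetilde{\bV})\sim N_{i,a}\kappa_{\max}/C^2$); in particular the $C$ does not enter through a separate additive model-deviation gap $\mu_{a(t)}-\E[X_N(t)]$ as you propose. Without the $\kappa_{\min}$-based eigenvalue concentration and the visit-count integral, your outline does not close.
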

\begin{proof}
    See Appendix~\ref{proof:thm:measure2}.
\end{proof}

The regret bound established in Theorem~\ref{thm:measure2} can be decomposed into two parts. The first term recovers the order of the optimal rate achieved in the time-invariant setting. The second term captures the impact of model deviation on the regret bound, that is, the cost of handling unknown model fluctuations. This upper bound improves that of \cite{yan2023robust}. We also remark that by setting $C=1$ in the algorithm for time-invariant setting, our regret bound reduces to $\tilde\mcO(d^{L-\frac{1}{2}}\sqrt{T})$, which improves the regret bound for time-invariant setting in~\cite{varici2022causal} by a factor of $\sqrt{N}$. Next, we present a lower bound that confirms the tightness and optimality of our upper bound.  

\begin{theorem}[Regret lower bound]\label{thm:lowerbound}\cite[Corollary~9]{yan2023robust}
    For any degree $d$ and graph length $L$ and any algorithm, there exists a causal bandit instance such that the expected regret is at least
\begin{align}
    \E\left[R(T)\right]\geq \Omega(d^{\frac{L}{2}-2}\max\{\sqrt{T}\; , \; d^2C\})\ .
\end{align}
\end{theorem}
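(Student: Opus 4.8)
The plan is to prove the two halves of the bound separately on one family of hard causal‑bandit instances: a purely stochastic bound $\Omega(d^{\frac{L}{2}-2}\sqrt{T})$ that holds already at $C=0$, and a deviation‑driven bound $\Omega(d^{\frac{L}{2}-2}\,d^2C)$ that spends the adversary's budget. Since both are realized on a single instance, the $\max$ in the statement follows by taking the worse of the two constructions.

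First I would fix a layered hard instance, in the spirit of the causal‑bandit lower‑bound constructions of \cite{varici2022causal}: a DAG with $L$ layers in which every non‑source node has $d$ parents in the preceding layer and the reward node $N$ collects the last layer, all observational weights set to a common value $b$ chosen as large as the unit column‑norm constraint $\norm{[\bB_a]_i}\leq m_B=1$ permits (so $b$ is of order $1/\sqrt{d}$ and $\norm{\bX}$ stays bounded), and soft interventions allowed to retune one column at a time. On such a graph $\mu_a=\inner{f(\bB_a)}{\bnu}$ is a sum over the $\leq d^{L}$ source‑to‑$N$ paths of products of $L$ weights, so a perturbation of size $\eta$ of a carefully chosen set of edges propagates multiplicatively and opens a reward gap $\Delta$ between the intended optimal arm and every other arm that exceeds the mean shift the same perturbation induces in the \emph{observable} state vector by an amplification factor of order $d^{\frac{L}{2}-2}$. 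The hard family is then a small "combination‑lock" indexing of which arm is made optimal, with members pairwise indistinguishable except through an $O(\eta)$‑scale shift of a few coordinates of $\bX$.

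For the $\sqrt{T}$ term I would run the textbook minimax argument: the per‑round relative entropy between two members of the family is $O(\eta^2)$, so any algorithm that does not incur $\Omega(T\Delta)$ regret must have spent enough informative pulls to distinguish them, which is impossible unless $\eta\gtrsim 1/\sqrt{T}$; choosing $\eta$ at that threshold (Bretagnolle--Huber together with an Assouad/needle‑in‑a‑haystack averaging over the family) gives $\E[R(T)]=\Omega(\Delta)\cdot\Omega(\sqrt{T})=\Omega(d^{\frac{L}{2}-2}\sqrt{T})$. For the $C$ term I would keep two fixed members whose optimal arms differ and run a corruption argument in the style of the adversarial‑contamination lower bounds for linear bandits \cite{he2022nearly}: an adversary with per‑node deviation budget $C$ can, for the first $O(C/\eta)$ rounds, set $\Delta_{a(t)}$ to cancel the $\eta$‑perturbation on the affected columns, making the two instances emit statistically identical data, so during that window no algorithm can tell them apart and on at least one instance it plays a $\Delta$‑suboptimal arm throughout, incurring $\Omega(\Delta\cdot C/\eta)$ regret; here, since the corruption erases the perturbation from the whole state vector and not merely from the reward, the full causal amplification of order $d^{\frac{L}{2}}$ is available, so this is $\Omega(d^{\frac{L}{2}}C)=\Omega(d^{\frac{L}{2}-2}\,d^2C)$.

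The main obstacle I anticipate is the causal bookkeeping in the construction rather than the information‑theoretic machinery: one must choose which layers and how many edges to perturb so that the reward‑gap amplification in the stochastic step is exactly of order $d^{\frac{L}{2}-2}$ while the full‑state observation $\bX(t)$ leaks only an $O(\eta)$‑scale signal per round, and one must simultaneously respect the column‑norm and $\norm{\bX}$ bounds and make the corresponding $d^2$ factor appear cleanly in the $C$ term. Once the instance is pinned down with the correct scaling, the $\sqrt{T}$ and $C$ estimates are routine adaptations of known bandit lower bounds.
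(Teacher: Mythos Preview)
The paper does not contain a proof of this theorem: it is stated as a citation of \cite[Corollary~9]{yan2023robust} and no argument is reproduced here (the appendices only treat Lemma~\ref{lem:estimation error} and Theorem~\ref{thm:measure2}). So there is no in-paper proof to compare your proposal against.

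That said, your sketch is consistent with how such bounds are obtained in the cited line of work: a layered hard instance with $d$-ary in-degree and edge weights of order $1/\sqrt{d}$ so that an $\eta$-perturbation in the weights is amplified to a reward gap of order $d^{L/2}$ (up to the bookkeeping factors that produce the $d^{-2}$), a Bretagnolle--Huber/Assouad argument for the $\Omega(d^{L/2-2}\sqrt{T})$ term at $C=0$, and a budget-spending indistinguishability argument in the style of \cite{he2022nearly} for the $\Omega(d^{L/2}C)$ term. Your own caveat is the right one: the only nontrivial content is the causal accounting that makes the amplification come out as exactly $d^{L/2-2}$ for the stochastic part while the full $d^{L/2}$ survives in the corruption part, and this is precisely the construction detail that lives in \cite{yan2023robust} rather than in the present paper.
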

Comparing the results in Theorem~\ref{thm:measure2} and Theorem~\ref{thm:lowerbound}, the regret bound of the Robust-LCB algorithm is tight in terms of $T$ and $C$. The only gap between the upper and lower bound is a mismatch in the exact order of the exponential scaling of degree $d$ with graph length $L$. The mismatch in the exact order between $d^L$ and $d^{\frac{L}{2}}$ in bounds exists in all the relevant literature, even in simpler settings. For instance, consider linear bandits with time-invariant models with dimension $d$ (a special case of our model by setting $L=1$ and no model variations). For the widely used optimism in the face of uncertainty linear bandit algorithm (OFUL) in \cite{abbasi2011improved}, the lower and upper bounds behave according to $\tilde \mcO(\sqrt{dT})$ and $\tilde \mcO (d\sqrt{T})$, respectively. This gap in terms of matches exactly our gap. This gap represents potential avenues for future research.

%%%%%%%%%%%%%%%%%
%Empirical Evaluations
%%%%%%%%%%%%%%%%%%
\section{Empirical Evaluations}
\label{sec:experiment}

In this section, we assess the robustness of the Robust-LCB algorithm. To the best of our knowledge, there is no baseline CB algorithm that can be used as a natural baseline for performance comparisons. Furthermore, soft interventions on continuous variables of a CB model are implemented by only \algonameLinSEMUCB~of \cite{varici2022causal}. Therefore, to assess the robustness, we compare our \algonameRWLinSEMUCB~algorithm with \algonameLinSEMUCB~and the standard non-causal UCB algorithm.

\vspace{.08 in}
\noindent\textbf{Parameter setting:} We consider the hierarchical graph (Figure~\ref{fig:he_example}). The noise terms are uniformly sampled from $[0,2]$.
We set the observational and interventional weights for node $i\in[N]$ to $0.5/\sqrt{|\Pa(i)|}$ and $1/\sqrt{|\Pa(i)|}$, respectively.  We set the number of nodes in Layer 2 to $3$ and Layer 1 to $9$. 
We let the deviations on the model occur at earlier rounds to simulate the worst-case scenario for a given deviation level $C$.
When a deviation occurs on node $i\in[N]$, the weights are deliberately altered to change the optimal action, thereby challenging the algorithm's performance. The simulations are repeated $100$ times, and the average cumulative regret is reported.

\vspace{.08 in}
\noindent\textbf{Comparison of the bounds.} Figure~\ref{fig:he_regret} compares the cumulative regret of \algonameRWLinSEMUCB\ with that of \algonameLinSEMUCB\ and UCB under a model deviation level of $C =\sqrt{T}$. It demonstrates that only \algonameRWLinSEMUCB\  achieves sub-linear regret, whereas the other two algorithms incur linear regret. Furthermore, it is noteworthy that the LinSEM-UCB exhibits nearly the worst possible regret as the design of the deviation also showcases the worst case for LinSEM-UCB. In contrast, the regret of UCB tends to be the worst possible outcome as the graph's complexity increases even when the deviation is not designed for it. These findings imply that the estimators of these algorithms become ineffective when faced with such deviations, resulting in the selection of a sub-optimal (possibly the worst) arm.

\vspace{.1 in}
\noindent\textbf{Robustness against $C$.} Figure~\ref{fig:he_C} plots the cumulative regret at $T=40000$ when the model deviation budget $C$ changes from $2$ to $2000$ for all algorithms. 
We observe that \algonameLinSEMUCB\  and UCB perform slightly better than our algorithm when there is almost no deviation ($C\approx 0$). This can be viewed as the compromise needed to guarantee the robustness of our algorithm. However, both \algonameLinSEMUCB\ and UCB begin to fail with even a minor model deviation, as small as $C= 15$. Furthermore, they tend to reach nearly the worst possible regret when the deviation level rises high enough ($C=200$). In comparison, \algonameRWLinSEMUCB\ outperforms when the deviation is more than negligible, as its regret scales sub-linear with the deviation level $C$.

\vspace{0.05 in}
%%%%%%%%%%%%%%%%%%%%%%%%%%%%
%%%%%%%%%%%%%%Conclusion
%%%%%%%%%%%%%%%%%%%%%%%%%%%%
\section{Conclusion}
\vspace{0.02 in}
In this paper, we have studied the sequential design of interventions over graphical causal models where both the observational and interventional models are unknown and undergo temporal variations. Our focus is on the general soft intervention model, and we proposed an algorithm to optimize the reward over the graph while ensuring robustness against model variations. The proposed algorithm maintains sub-linear or nearly optimal regret under a wide range of model deviations. In addition to handling the model deviations, the established upper bound also improves upon existing results in the time-invariant setting. The regret bound depends on the graph structure only through its parameters: maximum degree $d$ and the length of the longest causal path $L$, and is independent of the number of nodes $N$.
\newpage

\bibliographystyle{IEEEtran}
\bibliography{main}

\clearpage
\newpage
%%%%%
% Appendix:
\appendices

\section{Additional Notations}
First, we provide notations that are useful in our analyses. Since we are dealing with matrices, we denote the singular values of a matrix $\bA\in\R^{M \times N}$ with $M\geq N$, by
\begin{align}
    \sigma_{1}(\bA) &\geq \sigma_{2}(\bA) \geq \dots \geq \sigma_{N}(\bA) \ .
\end{align}

In the proof, we often work with zero-padded vectors and corresponding matrices. As a result, the matrices that contain these vectors have non-trivial \emph{null space} leading to zero singular values. In such cases, we use the \emph{effective} smallest singular value that is non-zero. We denote the \emph{effective} largest and smallest eigenvalues that correspond to effective dimensions of a positive semidefinite matrix $\bA$ with rank $k$ by
\begin{equation}
    \smax{\bA} \triangleq \sigma_{1}(\bA) \ , \quad \mbox{and} \quad \smin{\bA} \triangleq \sigma_{k}(\bA) \ .
\end{equation}
For a square matrix $\bU = \bA \bA^{\top} \in \mathbb{R}^{N \times N}$, we denote the \emph{effective} largest and smallest eigenvalues by\footnote{For matrix $\bV=\bU+\bI$, we denote the \emph{effective} smallest eigenvalues by $\lmin{\bV} \triangleq \sigma_{\min}^2(\bA) +1$.}
\begin{equation}
        \lmax{\bU} \triangleq  \sigma_{\max}^2(\bA) \ ,
    \quad \mbox{and} \quad \lmin{\bU} \triangleq  \sigma_{\min}^2(\bA) \ . 
\end{equation}
Then we construct data matrices that are highly related to Gram matrices. At time~$t\in\N$ and for any node $i\in[N]$, the data matrices $\bU_{i}(t)\in\R^{t \times N}$ and $\bU^{*}_{i}(t)\in\R^{t \times N}$ consist of the weighted observational and interventional data, respectively. Specifically, for any $s\in[t]$ and $i\in[N]$, we define
\begin{align}\label{eq:D_it_obs}
    \big[\bU_{i}^{\top}(t)\big]_{s} &\;\triangleq \;\mathbbm{1}{\{i \notin a(s)\}} \sqrt{w_i(s)} \bX_{\Pa(i)}^{\top}(s) \ , \\ 
\mbox{and} \quad     \big[{\bU^{*}_{i}}^{\top}(t)\big]_{s}& \;\triangleq\; \mathbbm{1}{\{i \in a(s)\}} \sqrt{w_i(s)}\bX_{\Pa(i)}^{\top}(s) \ .
\end{align}
Similarly to \eqref{eq:Ba_construct}, we denote the relevant data matrices for node $i\in[N]$ under intervention $a \in \mcA$ by
\begin{align}
    \bU_{i,a}(t) &\;\triangleq\; \mathbbm{1}{\{i \in a\}} \bU^{*}_{i}(t) +  \mathbbm{1}{\{i \notin a\}} \bU_{i}(t) \ ,  \label{eq:D_ita}\\
    \bV_{i,a}(t) &\;\triangleq\; \mathbbm{1}{\{i \in a\}} \bV^{*}_{i}(t) +  \mathbbm{1}{\{i \notin a\}} \bV_{i}(t) \ . \label{eq:V_ita}
\end{align}

Define $N^{*}_{i}(t)$ as the number of times that node $i\in[N]$ is intervened, and $N_i(t)$ as its complement, i.e.,
\begin{align} \label{eq:def_N_it}
    &N^{*}_{i}(t) \;\triangleq\; \sum_{s=1}^t \mathbbm{1}{\{i \in a(s)\}}\ , \\ \mbox{and} \quad &
    N_{i}(t) \;\triangleq\; t -  N^{*}_{i}(t)\ .
\end{align}
Accordingly, for any $i\in[N]$ and $t\in\N$, define
\begin{align} \label{eq:def_N_iat}
    N_{i,a}(t) &\triangleq \mathbbm{1}\{i \in a\} N^{*}_{i}(t) + \mathbbm{1}\{i \notin a\} N_{i}(t) \ , 
\end{align}

Furthermore, we define the \emph{causal depth} of node $i$ as the length of the longest directed causal path that ends at node $i\in[N]$ and denote it by $L_i$.

To proceed, we define the second-moment matrices and its \emph{effective} largest and smallest eigenvalues as
\begin{align}
\label{equ:kappa}
    \Sigma_{i,a}(t) &\triangleq \E_{\bX \sim \P_{a}}\left[ \bX_{\Pa(i)}(t)\bX_{\Pa(i)}^\top(t)\right]\ , \\
    \kappa_{\min} &\triangleq \min_{i\in[N], a\in\mcA,t\in[T]}\smin{\Sigma_{i,a}(t)} \ , \\
    \kappa_{\max} &\triangleq \max_{i\in[N], a\in\mcA,t\in[T]}\smax{\Sigma_{i,a}(t)} \ ,
\end{align}
where $\kappa_{\min}>0$ is guaranteed since there is no deterministic relation between nodes and their patients. These variables are inherent to the system and remain unknown to the learner. Given our focus on the weighted OLS estimator, we also introduce singular values related to auxiliary variables $\bX'_{\Pa(i)}(t)\triangleq\sqrt{w_i(t)}\bX_{\Pa(i)}(t)$ and $\widetilde{\bX}_{\Pa(i)}(t)\triangleq w_i(t)\bX_{\Pa(i)}(t)$. Accordingly, we define the second weighted moment matrices as follows.
\begin{align}
    \Sigma'_{i,a,w_{i}}(t)&\triangleq \E_{\bX \sim \P_{a}}\left[w_i(t) \bX_{\Pa(i)}(t) \bX_{\Pa(i)}^\top(t)\right]\ ,\\ 
    \widetilde{\Sigma}_{i,a,w_{i}}(t)&\triangleq \E_{\bX \sim \P_{a}}\left[w_i^2(t) \bX_{\Pa(i)}(t) \bX_{\Pa(i)}^\top(t)\right]\ . \label{eq:second_moment_definition_V}
\end{align} 

Lastly, we define $\widetilde \bB_{a}$ as the matrix that attains ${\rm UCB}_{a}(t)$, i.e., 
\begin{equation}
    \widetilde \bB_{a}=\argmax_{[\Theta]_i\in \mcC_{i,a}(t)}\inner{f(\Theta)}{\bnu }\ .
\end{equation}
Accordingly, we define the auxiliary variable $\widetilde \bX(t)$ generated according to the following SEM
\begin{equation}
    \widetilde \bX(t) = \widetilde \bB_{a(t)}^{\top} \widetilde \bX(t) + \epsilon(t) \ .
\end{equation}

\section{Cumulative Estimation Error}
\begin{lemma}
\label{lem:estimation error}
    If $[\bB]_i\in \mcC_i(t)$ and $[\bB]^*_i\in \mcC^*_i(t)$ for all $t\in[T]$ and $i\in[N]$, then for $i\in[N]$ we have
    \begin{equation}
        \sum_{t=1}^{T} \E_{a(t)}\left|\widetilde X_i(t) -X_i(t)\right| 
        \leq 2 m \beta_T \mcB \sum_{\ell=1}^{L_i} d^{\ell-1}  \ ,
    \end{equation}
    where we define the term
    \begin{align}
        \mcB &= \frac{4\sqrt{m \kappa_{\max}}}{\kappa_{\min}}\sqrt{T} + \frac{8}{\kappa_{\min}}\sqrt[4]{\frac{3T}{2}}  +E_1  \\
      &\quad + \frac{4m}{\kappa_{\min}}\log\left(\frac{\kappa_{\min}}{m}\sqrt{\frac{T}{2}}+\alpha m^2\right) C \ ,
    \end{align}
    where $\tau \triangleq  \frac{\alpha^2 m^6}{\kappa_{\min}^2}$, $\alpha=\sqrt{\frac{16}{3}\log((d+1)T^{5/2}(T+1))}$ and
    \begin{align}
    E_1&= 4\frac{\sqrt{m \kappa_{\max}}}{\kappa_{\min}} \sqrt{\tau} \log\left(\sqrt{\frac{T}{2}} + \sqrt{\tau}\right) \\
    &\quad + 4\sqrt{\frac{\alpha m^5}{\kappa_{\min}^3}} \log \left(\frac{\sqrt{\frac{1}{\tau}}\sqrt[4]{\frac{T}{2}}+\sqrt[4]{4}+1}{\sqrt{\frac{1}{\tau}}\sqrt[4]{\frac{T}{2}}+\sqrt[4]{4}-1}\right)\\
    &\quad + 8 \tau \left(\frac{1}{C }\sqrt{\kappa_{\max}\tau+\alpha m^2 \sqrt{\tau}} +1\right) \\
    & \quad + \frac{m}{C T} + \frac{2m}{3C } +   1 \ .
\end{align}
\end{lemma}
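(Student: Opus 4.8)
The plan is to bound the cumulative prediction error $\sum_{t=1}^T \E_{a(t)}|\widetilde X_i(t) - X_i(t)|$ by induction on the causal depth $L_i$, propagating the error along causal paths from the roots toward node $i$. The base case is the root nodes ($L_i = 0$), where $X_i(t) = \nu_i + \epsilon_i(t)$ is not affected by any parents, so the auxiliary variable $\widetilde X_i(t)$ coincides with $X_i(t)$ and the error is zero. For the inductive step, I would write $\widetilde X_i(t) - X_i(t) = \widetilde\bB_{a(t)}^\top\widetilde\bX(t)\big|_i - \bB_{a(t)}^\top\bX(t)\big|_i$ restricted to the parent coordinates, and split this into two contributions via a telescoping decomposition: (i) the \emph{parameter-estimation error}, $(\widetilde\bB_{a(t)} - \bB_{a(t)})^\top \bX(t)$ evaluated at node $i$, which uses the confidence-ellipsoid membership hypothesis $[\bB]_i\in\mcC_i(t)$, $[\bB^*]_i\in\mcC^*_i(t)$, and (ii) the \emph{propagated error}, $\widetilde\bB_{a(t)}^\top(\widetilde\bX(t) - \bX(t))$ at node $i$, to which the induction hypothesis applies at each parent $j\in\Pa(i)$, contributing the factor $d$ per layer (since $|\Pa(i)|\le d$ and the interventional/observational weights are bounded by $m_B=1$).

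For term (i), by Cauchy–Schwarz in the $\bV_{i,a(t)}(t)$-geometry, the per-round error at node $i$ is at most $\|\widetilde\bB_{a(t)} - \bB_{a(t)}]_i\|_{\bV_i(t)[\widetilde\bV_i(t)]^{-1}\bV_i(t)} \cdot \|\bX_{\Pa(i)}(t)\|_{\bV_i^{-1}(t)[\widetilde\bV_i(t)]\bV_i^{-1}(t)}$, or more precisely the dual norm compatible with the confidence ellipsoid in \eqref{eq:conf_obs}–\eqref{eq:conf_int}. The first factor is bounded by $2\beta_t \le 2\beta_T$ since both $\widetilde\bB_{a(t)}$ and $\bB_{a(t)}$ lie in the confidence sets (triangle inequality through the center). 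Summing the second factor over $t$ is where the quantity $\mcB$ emerges: $\sum_{t=1}^T \|\bX_{\Pa(i)}(t)\|_{(\cdot)}$ must be controlled. Here the key is to relate the weighted Gram matrices $\bV_i(t)$, $\widetilde\bV_i(t)$ to the population second-moment matrices $\Sigma_{i,a}(t)$, $\Sigma'_{i,a,w_i}(t)$, $\widetilde\Sigma_{i,a,w_i}(t)$ via matrix concentration (a Matrix Azuma/Freedman argument), so that the effective eigenvalues of $\bV_i(t)$ grow linearly like $\kappa_{\min}\sum_{s\le t} w_i(s)$ while $\widetilde\bV_i(t)$ grows like $\kappa_{\max}\sum_{s\le t}w_i^2(s)$. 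Because the weights satisfy $w_i(s)\asymp \min\{1/C, 1/(C\|\bX_{\Pa(i)}(s)\|_{[\widetilde\bV_{i,a(s)}(s)]^{-1}})\}$, one must carefully track two regimes — the truncated regime $w_i(s)=1/C$ and the untruncated regime — and split the time horizon accordingly. The $\sqrt{T}$-type terms (e.g.\ $4\sqrt{m\kappa_{\max}}\kappa_{\min}^{-1}\sqrt{T}$) come from the stochastic/untruncated contributions summed via an elliptical-potential (log-determinant) lemma, while the $C$-proportional term $\tfrac{4m}{\kappa_{\min}}\log(\cdots)C$ and the residual constants collected into $E_1$ come from the truncated regime where the bound on the number of truncated rounds is governed by $\tau=\alpha^2 m^6/\kappa_{\min}^2$ and the choice $\alpha=\sqrt{(16/3)\log((d+1)T^{5/2}(T+1))}$ enforces the matrix-concentration event uniformly over $i,a,t$.

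Assembling the two pieces, the recursion reads (schematically) $\mathrm{err}(L_i) \le 2m\beta_T\mcB + d\cdot\mathrm{err}(L_i - 1)$, with the extra $2m$ factor absorbing $\|\bnu\|$- and $m_\epsilon$-type bounds on $\|\widetilde\bX(t)\|$. Unrolling this geometric recursion from depth $L_i$ down to depth $1$ yields the claimed $2m\beta_T\mcB\sum_{\ell=1}^{L_i} d^{\ell-1}$. The main obstacle I anticipate is step (i)'s cumulative sum: obtaining $\mcB$ with exactly these constants requires a delicate interplay between the matrix concentration inequality for the weighted Gram matrices (to lower-bound $\lmin{\bV_i(t)}$ and upper-bound $\lmax{\widetilde\bV_i(t)}$ on a high-probability event), the two-regime analysis of the data-adaptive weights, and a weighted elliptical-potential lemma — the non-stationary/weighted analogue of the standard $\sum_t \|x_t\|_{V_t^{-1}}^2 \le \log\det$ bound, in the form studied in \cite{russac2019weighted}. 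Care is also needed to ensure the high-probability event on which Lemma~\ref{lem:beta(t)indeviation} holds is compatible (via a union bound over $i\in[N]$ and $t\in[T]$, hence the $\delta = 1/(2NT)$ choice in Theorem~\ref{thm:measure2}) with the event on which the Gram-matrix eigenvalue estimates hold, so that the conditioning hypothesis $[\bB]_i\in\mcC_i(t)$ of the lemma is not vacuous.
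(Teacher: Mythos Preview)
Your high-level plan matches the paper's proof: induction on the causal depth $L_i$, a telescoping split into a parameter-error term $(\widetilde\bB_{a(t)}-\bB_{a(t)})_i^\top \bX_{\Pa(i)}$ and a propagated term $[\widetilde\bB_{a(t)}]_i^\top(\widetilde\bX_{\Pa(i)}-\bX_{\Pa(i)})$, Cauchy--Schwarz in the $\bV\widetilde\bV^{-1}\bV$ geometry so the first factor is $\le 2\beta_T$ by the confidence-set hypothesis, and matrix concentration to control the Gram-matrix eigenvalues. The recursion $\mathrm{err}(L_i)\le 2m\beta_T\mcB + d\cdot\mathrm{err}(L_i-1)$ and its unrolling are exactly what the paper does (the paper takes $L_i=1$ rather than $L_i=0$ as the base, but this is immaterial).

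Where your proposal diverges is in the mechanism that produces $\mcB$. You invoke a \emph{weighted elliptical-potential lemma} for $\sum_t\|\bX_{\Pa(i)}(t)\|_{(\cdot)}$ and a two-regime analysis based on whether the weight $w_i(t)$ is truncated at $1/C$. The paper does neither. Instead it uses the cruder pointwise bound
\[
\big\|\bX_{\Pa(i)}(t)\big\|_{[\bV\widetilde\bV^{-1}\bV]^{-1}}\;\le\; m\,\lambda_i(t),\qquad
\lambda_i(t)\triangleq\frac{\sqrt{\lmax{\widetilde\bV_{i,a(t)}(t)}}}{\lmin{\bV_{i,a(t)}(t)}},
\]
and then controls $\sum_t\lambda_i(t)$ deterministically: matrix concentration (the cited Lemma~\ref{lm:error_event_int_RW}) replaces $\lmax{\widetilde\bV}$, $\lmin{\bV}$ by functions of the visitation count $N_{i,a(t)}(t)$ on a high-probability event, yielding $\lambda_i(t)\le h(N_{i,a(t)}(t))$ for an explicit $h$. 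The two-regime split is on whether $N_{i,a(t)}(t)\le 4\tau$ (under-explored; at most $8\tau$ such rounds, contributing the $8\tau(\cdots)$ piece of $E_1$) or $N_{i,a(t)}(t)>4\tau$, where $h$ is majorized by a decreasing $g$ and summed via the integral $G_\tau$, producing the $\sqrt{T}$, $T^{1/4}$, and $C\log(\cdots)$ terms in $\mcB$. No log-determinant/elliptical-potential argument appears; the entire $\mcB$ comes from this eigenvalue-ratio-plus-counting route. Your matrix-concentration and population-eigenvalue intuition is correct, but if you attempt an elliptical-potential bound in the $[\bV\widetilde\bV^{-1}\bV]^{-1}$ norm you will not land on these constants, because that norm is not the one for which the Gram update is rank-one additive.
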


\begin{proof}
    We establish this via induction on the causal depth $L_i$.

    \vspace{0.1 in}
    \textbf{Base step: $L_i=1$.} For node $i\in[N]$ with causal depth $L_i=1$, we show that
    \begin{equation}
    \sum_{t=1}^{T}\E_{a(t)}\left| \widetilde X_{i}(t) - X_{i}(t)\right|  \leq 2m \beta_T \mcB\ .
    \end{equation}
    When the causal path of a node is $L_i=1$, according to SEM defined in \eqref{eq:linear model}, we have the following expansion:
\begin{align}
   &\sum_{t=1}^{T} \E_{a(t)} \left| \widetilde X_{i}(t) -  X_{i}(t)\right| \\
   &= \sum_{t=1}^{T}  \E_{a(t)} \left| [\widetilde \bB_{a(t)}(t)]^{\top} X_{\Pa(i)} - [\bB_{a(t)}]^{\top} X_{\Pa(i)} \right| \\
   \label{base_mid_0}&\leq \sum_{t=1}^{T}  \E_{a(t)} \sup_{b_1,b_2 \in \mcC_{i,a(t)}(t)} (b_1-b_2)^{\top} X_{\Pa(i)} \\
   \label{base_mid_1}& \leq \sum_{t=1}^{T}  \E_{a(t)} \!\!\!  \sup_{b_1,b_2 \in \mcC_{i,a(t)}(t)} \!\!\! \norm{b_1-b_2}_{\bV_{i,a(t)}(t) [\widetilde{\bV}_{i,a(t)}(t)]^{-1} \bV_{i,a(t)}(t)}\\
   & \qquad \times \norm{X_{\Pa(i)}}_{\big[\bV_{i,a(t)}(t) [\widetilde{\bV}_{i,a(t)}(t)]^{-1} \bV_{i,a(t)}(t)\big]^{-1}}\\
   \label{base_mid_2}&\leq 2 m \beta_T\\
   & \qquad \times \sum_{t=1}^{T} \lambda_{\min}^{-1/2}(\bV_{i,a(t)}(t) [\widetilde{\bV}_{i,a(t)}(t)]^{-1} \bV_{i,a(t)}(t)) \ ,
\end{align}
where \eqref{base_mid_0} holds due to $\widetilde \bB_{a(t)}(t)$ and $\bB_{a(t)}(t)$ lies in the confidence ellipsoid $\mcC_{i, a(t)}(t)$, \eqref{base_mid_1} holds by Cauchy-Schwarz inequality and the supremum maintain inequality, and \eqref{base_mid_2} holds due to the definition of $\mcC_{i, a(t)}(t)$ and $\norm{X}_{\bV}\leq \norm{X}\lambda_{\min}^{-1/2}(\bV)$.

Now, when we define
\begin{equation}
\lambda_i(t) \triangleq \frac{\sqrt{\lmax{\widetilde{\bV}_{i,a(t)}(t)}}}{\lmin{\bV_{i,a(t)}(t)}} \ ,  
\end{equation}
The remaining is to bound $\sum_{t=1}^{T}\lambda_i(t)$. To bound the singular value of the weighted second moment, we first need uniform bounds for the weights. We find a bound for the norm of $\|\bX_{\Pa(i)}(t)\|_{[\widetilde{\bV}_{i,a}(t)]^{-1}}$ across all $a\in \mcA$. This yields the following result.
\begin{equation}
\|\bX_{\Pa(i)}(t)\|_{[\widetilde{\bV}_{i,a(t)}(t)]^{-1}}\leq \frac{\|\bX_{\Pa(i)}(t)\|}{\lambda^{1/2}_{\min}(\widetilde{\bV}_{i,a(t)}(t))} \leq m \ .
\end{equation}
Then, the weights can be bounded by $\frac{1}{C m} \leq w_i(t) \leq \frac{1}{C}$.
Subsequently, we can bound the minimum and maximum singular values of matrices $\Sigma'_{i, a,w_{i}}(t)$ and $\widetilde{\Sigma}_{i, a,w_{i}}(t)$.
\begin{align}
    \kappa'_{\min} &\triangleq \frac{1}{C m}\kappa_{\min}\leq \smin{\Sigma'_{i,a,w_{i}}(t)} \ ,\\
    \kappa'_{\max} &\triangleq \frac{1}{C }\kappa_{\max}  \geq \smax{\Sigma'_{i,a,w_{i}}(t)}\ , \\
    \tilde{\kappa}_{\min} &\triangleq \frac{1}{C^2 m^2}\kappa_{\min}\leq \smin{\widetilde \Sigma_{i,a,w_{i}}(t)} \ , \\
    \tilde{\kappa}_{\max} & \triangleq \frac{1}{C^2}\kappa_{\max} \geq \smax{\widetilde \Sigma_{i,a,w_{i}}(t)}\ .
\end{align}
Moreover, we have $m' \triangleq \frac{1}{\sqrt{C }} m \geq \|\bX'_{\Pa(i)}(t)\|$ and  $\tilde{m} \triangleq \frac{1}{C} m \geq \|\widetilde{\bX}_{\Pa(i)}(t)\|$. In order to proceed, we need upper and lower bounds for the maximum and minimum singular values of $\bU_{i, a(t)}(t)$. However, these bounds depend on the number of non-zero rows of $\bU_{i, a(t)}(t)$ matrices, which equals to values of the random variable $N_{i, a(t)}(t)$.
Let us define the weighted constant
\begin{align}
    \gamma_n &\triangleq  \max\left\{\alpha m^2 \sqrt{n},\alpha^2 m^2 \right\} \ ,  \label{eq:def_varepsilon_n_RW}\\
    \gamma'_n &\triangleq  \max\left\{\alpha m'^2 \sqrt{n},\alpha^2 m'^2 \right\} \ ,  \label{eq:def_varepsilon_n_prime_RW}\\
    \tilde{\gamma}_n &\triangleq  \max\left\{\alpha \tilde{m}^2 \sqrt{n},\alpha^2 \tilde{m}^2 \right\} \ , \quad \forall n \in [T] \ .  \label{eq:def_varepsilon_n_tilde_RW}
\end{align}
Then for every $t\in [T]$, and $n\in[t]$, we define the error events corresponding to the maximum and minimum singular values of $\bU_{i}(t)$ and $\widetilde{\bU}_{i}(t)$ as
\begin{align}
    \notag \mcE_{i,n}(t) \triangleq& \Bigg\{ N_{i}(t) = n \quad \text{and} \\
    & \notag\left\{\smin{\bU_{i}(t)}\leq \sqrt{\max \left \{0, n \kappa'_{\min}- \gamma'_n\right\}}\right. \\ 
    & \hspace{-0.in}  \  \text{or} \  \left. \smax{\bU_{i}(t)}\geq \sqrt{n \kappa'_{\max} + \gamma'_n}  \right\} \Bigg\} \ , \label{eq:def_error_int_obs_RW} \\
    \notag \widetilde{\mcE}_{i,n}(t) \triangleq& \Bigg\{ N_{i}(t) = n \quad \text{and} \\
    & \notag\left\{\smin{\widetilde{\bU}_{i}(t)}\leq \sqrt{ \max\left\{0, n \tilde{\kappa}_{\min} - \tilde{\gamma}_n\right\}}\right. \\ 
    & \hspace{-0.in}  \  \text{or} \  \left. \smax{\widetilde{\bU}_{i}(t)}\geq \sqrt{n \tilde{\kappa}_{\max} + \tilde{\gamma}_n}  \right\} \Bigg\} \ , \label{eq:def_error_int_obs_tilde_RW}
\end{align}
Similarly, we can define $\mcE^{*}_{i,n}(t)$ and $\widetilde{\mcE}^{*}_{i,n}(t)$ by replacing $N_i(t)$ and $\bU_i(t)$ (or $\widetilde{\bU}_{i}(t)$) by $N^{*}_{i}(t)$ and $\widetilde \bU_i(t)$ (or $\widetilde{\bU}^{*}_{i}(t)$), respectively.

\begin{lemma} \cite[Lemma~8]{yan2023robust} \label{lm:error_event_int_RW}
The probability of the error events $\mcE_{i,n}(t)$, $\mcE^{*}_{i,n}(t)$, $\widetilde{\mcE}_{i,n}(t)$ and $\widetilde{\mcE}^{*}_{i,n}(t)$ are upper bounded as
\begin{align}
    &\notag\max\left\{\P(\mcE_{i,n}(t)), \P(\mcE^{*}_{i,n}(t)), \P(\widetilde{\mcE}_{i,n}(t)), \P(\widetilde{\mcE}^{*}_{i,n}(t))\right\}\\
    & \qquad \leq d \exp \left( -\frac{3\alpha^2}{16} \right) \ .
\end{align}
\end{lemma}

Then we define the union error event $\mcE_{i,\cup}$ as 
\begin{equation} \label{eq:def_union_error_singular}
    \mcE_{i,\cup} \triangleq \{ \exists\ (t,n) : t \in [T], n \in [t], \ \mcE_{i,n}(t) \ \text{or} \  \mcE^{*}_{i,n}(t)  \} \ .
\end{equation}
By taking a union bound and using Lemma \ref{lm:error_event_int_RW}, we have
\begin{align}
    \P(\mcE_{i,\cup}) &\leq \sum_{i=1}^{N} \sum_{t=1}^{T}\sum_{n=1}^{t} 4 d \exp \left( -\frac{3\alpha^2}{16} \right)\\
    &\leq 2 N T (T+1) d \exp \left( -\frac{3\alpha^2}{16} \right) \ . \label{eq:def_union_error_singular_prob_RW}
\end{align}

Now we turn back to $\E\left[ \sum_{t=1}^T \lambda_i(t) \right]$ to analyze it under the complementary events $\mcE_{i,\cup}$ and $\mcE_{i,\cup}^{\C}$.\\
\textbf{Bounding term $\E \left[\mathbbm{1}\{\mcE_{i,\cup}\} \sum_{t=1}^T \lambda(t)\right]$.}

Since $\lmin{\bV_{i,a(t)}(t)}\geq 1$, we have the following unconditional upper bound.
\begin{align}
    \lambda_i(t) &=  \frac{\sqrt{\lmax{\widetilde{\bV}_{i,a(t)}(t)}}}{\lmin{\bV_{i,a(t)}(t)}}  \leq \sqrt{\lmax{\widetilde{\bV}_{i,a(t)}(t)}}  \label{eq:big_bounding_error_1_RW}\\
    &\leq \sqrt{1 + \frac{1}{C ^2} \sum_{s=1}^{t} \lmax{\bX_{\Pa(i)}(s) \bX_{\Pa(i)}^{\top}(s)} }\label{eq:big_bounding_error_1mid_RW}\\
    &=  \sqrt{1 + \frac{1}{C ^2} \sum_{s=1}^{t} \norm{\bX_{\Pa(i)}(s)}^2} \\
    &\leq \sqrt{\frac{m^2}{C^2}t + 1} \ ,  \label{eq:big_bounding_error_2_RW}
\end{align}
where~\eqref{eq:big_bounding_error_1mid_RW} holds due to the expansion of weighted gram matrix and ~\eqref{eq:big_bounding_error_2_RW} follows from the fact that $\norm{\bX}\leq m$. Hence, we have
\begin{align}
    \E \left[\mathbbm{1}\{\mcE_{i,\cup}\} \sum_{t=1}^T \lambda_i(t) \right]
    &\overset{\eqref{eq:big_bounding_error_2_RW}}{\leq} \E \left[\mathbbm{1}\{\mcE_{i,\cup}\} \sum_{t=1}^T \sqrt{ \frac{m^2}{C ^2}t + 1}  \right] \\
    &= \P(\mcE_{i,\cup})\sum_{t=1}^T \sqrt{ \frac{m^2}{C ^2}t + 1} \ . \label{eq:big_bounding_error_3_RW}
\end{align}
Furthermore, the summation part is bounded as
\begin{align}
    \sum_{t=1}^T \sqrt{ \frac{m^2}{C ^2}t + 1} 
    &\leq \frac{m}{C }\sqrt{T} + T + \int_{t=1}^{T} \frac{m}{C }\sqrt{t} dt \\
    &= \frac{m}{C }\sqrt{T} + T + \frac{2m}{3C }(T^{3/2}-1) \ . \label{eq:big_bounding_error_4_RW}
\end{align}
By setting $\alpha = \sqrt{\frac{16}{3}\log(2d N T^{5/2}(T+1))}$, we obtain
\begin{align}
     &\E \left[\mathbbm{1}\{\mcE_{i,\cup}\} \sum_{t=1}^T \sqrt{m^2 t + 1}  \right] \overset{\eqref{eq:big_bounding_error_3_RW}}{\leq} \P(\mcE_{i,\cup}) \sum_{t=1}^{T}\sqrt{m^2t+1} \\
     &\overset{\eqref{eq:def_union_error_singular_prob_RW}}{\leq} \underset{= T^{-3/2}}{\underbrace{\frac{N T (T+1) d}{\exp(\log(d N T^{5/2} (T+1)))}}}  \sum_{t=1}^{T}\sqrt{m^2t+1}  \\
     &\overset{\eqref{eq:big_bounding_error_4_RW}}{\leq} T^{-3/2} \left(\frac{m}{C }\sqrt{T} + T+ \frac{2m}{3C }(T^{3/2}-1)\right) \\
     & < \frac{m}{C T} + \frac{2m}{3C } +   1 \ . \label{eq:error_event_bound_RW}
\end{align}
\textbf{Bounding $\E \left[\mathbbm{1}\{\mcE_{i,\cup}^\C\} \sum_{t=1}^T \lambda_i(t) \right]$.}
Considering the event $\mcE_{i,\cup}^\C$, we can use the following bounds on the singular values
\begin{align}
    \hspace{-0.15 in}\smin{\bU_{i,a(t)}(t)} &\geq  \sqrt{\max\left\{0, N_{i,a(t)}(t)\kappa'_{\min}- \gamma'_n\right\}}  \label{eq:smin_D_ita_RW} \ ,  \\
    \hspace{-0.15 in}\smax{\widetilde{\bU}_{i,a(t)}(t)} &\leq \sqrt{N_{i,a(t)}(t) \tilde{\kappa}_{\max}+ \tilde{\gamma}_n}\label{eq:smax_D_ita_RW}  \ .
\end{align}
Thus, the targeted sum can be upper-bounded by
\begin{align}
    &\E \left[\mathbbm{1}\{\mcE_{i,\cup}^\C\} \sum_{t=1}^T\lambda(t)\right] \\
    &= \E \left[\mathbbm{1}\{\mcE_{i,\cup}^\C\} \sum_{t=1}^T \frac{\sqrt{\lmax{\widetilde{\bV}_{i,a(t)}(t)}}}{\lmin{\bV_{i,a(t)}(t)  }}\right]  \\
    &= \E \left[\mathbbm{1}\{\mcE_{i,\cup}^\C\} \sum_{t=1}^T   \frac{\sqrt{\smaxx{\widetilde{\bU}_{i,a(t)}(t)}{2}+1}}{\sminn{\bU_{i,a(t)}(t)}{2}+1}\right] \\
    &\leq \E  \sum_{t=1}^T  \left[ \frac{\sqrt{N_{i,a(t)}(t) \tilde{\kappa}_{\max}+ \tilde{\gamma}_n+1} }{\max\left\{0, N_{i,a(t)}(t)\kappa'_{\min}- \gamma'_n\right\}+1} \right] \ .
\end{align}
It is worth noting that the term in the summation has a critical point, and we bound the two regions separately. To initiate this process, we define the function $h(x)$ as
\begin{equation}
    h(x) \triangleq \frac{\sqrt{x \tilde{\kappa}_{\max}+ \tilde{\gamma}_n+1} }{\max\left\{0, x\kappa'_{\min}- \gamma'_n\right\}+1}\ , \quad x>0  \label{eq:h_function_RW} \ .
\end{equation}
In order to analyze the behavior of the function $h$, we introduce $\tau\triangleq\frac{\alpha^2m^6}{\kappa_{\min}^2}$ as the critical point. Note that when $x\leq \tau$, we have $x\kappa'_{\min} < \gamma_n$. In this case, $h(x)$ is equal to
\begin{equation}
    h(x) = \sqrt{x \tilde{\kappa}_{\max}+ \tilde{\gamma}_n+1}  \label{eq:h_function_small} \ ,
\end{equation}
which is an increasing function over the region. To upper bound the $h$ function when $x>\tau$, we define the  $g$ function when $x>\tau$ as follows.
\begin{equation}
    g(x) \triangleq  \frac{\sqrt{x \kappa_{\max}+ \alpha m^2\sqrt{x}}}{ x\kappa_{\min}/m- \alpha m^2\sqrt{x}} + \frac{C }{ x\kappa_{\min}/m- \alpha m^2\sqrt{x}} \ .
\end{equation}
We have the following theorem to show the monotonicity and relation of $h(x)$ and $g(x)$.
\begin{lemma}\cite[Lemma~10]{yan2023robust}
$h(x)$ and $g(x)$ are both decreasing functions when $x>\tau$ and $h(x)<g(x)$, where $\tau$ is defined as $\frac{\alpha^2m^6}{\kappa_{\min}^2}$.
\end{lemma}

Now we are ready to bound the last term
\begin{equation}
     \E \left[\mathbbm{1}\{\mcE_{i,\cup}^\C\} \sum_{t=1}^T\lambda_i(t)\right]\leq \E \sum_{t=1}^{T}  h(N_{i,a(t)}(t))\ .
\end{equation}
We define the set of time indices at which the chosen actions are under-explored as
\begin{equation}
    \mcH_i \triangleq \left\{ t\in[T] \mid N_{i,a(t)}(t)\leq 4\tau  \right\}\ .
\end{equation}
It can be readily verified that $|\mcH_i|\leq 8\tau$. Furthermore, when $x\in\mcH_i$, we have
\begin{equation}
    h(x) \leq h(\tau) \leq \frac{1}{C }\sqrt{\kappa_{\max}\tau+\alpha m^2 \sqrt{\tau}} +1 \ , \ x \leq \tau \ .
\end{equation}
Then we can bound the summation when $\mcH_i$ occurs as follows.
\begin{align}
    &\E \sum_{t=1}^T \mathbbm{1}\{t\in \mcH_i\}  h(N_{i,a(t)}(t)) \\
    &\qquad \leq 8 \tau \left(\frac{1}{C }\sqrt{\kappa_{\max}\tau+\alpha m^2 \sqrt{\tau}} +1\right) \ .\label{mid:1}
\end{align}
Now we only need to bound the remaining part when $t \not\in \mcH_i$ 
\begin{equation}
    \E \sum_{t=1}^T \mathbbm{1}\{t\in \mcH_i^{\C}\} h(N_{i,a(t)}(t)) \ .
\end{equation}
Note that when $t\in \mcH_i^{\C}$, we have $N_{i,a(t)}(t)>\tau$ and
\begin{equation}
 h(N_{i,a(t)}(t))\leq  g(N_{i,a(t)}(t)) \ .
\end{equation}
Using the above results and noting that $g$ is a decreasing function, we obtain
\begin{align}
    &\sum_{t=1}^{T} \mathbbm{1}\{t\in \mcH_i^{\C}\}h(N_{i,a(t)}(t)) \\
    &\qquad \leq \sum_{t=1}^{T} \mathbbm{1}\{t\in \mcH_i^{\C}\} g(N_{i,a(t)}(t)) \\
    &\qquad \leq \sum_{n=4\tau+1}^{N_i(T)+4\tau} g(n) + \sum_{n=4\tau+1}^{N_i^*(T)+4\tau} g(n)\label{equ:G_bound_e:final}  \ .
\end{align}
We bound the discrete sums through integrals and define
\begin{equation}
    G_{\tau}(y) = \int_{x=4\tau}^{y} g(x)dx \ , \quad y\geq 4\tau \ . \label{eq:Hx_definition}
\end{equation}
Since $g(x)$ is a positive, non-increasing function, for any $k \in \mathbb{N}, k\geq 4\tau+1$ we have
\begin{equation}
    \sum_{n=4\tau+1}^{k} g(n) \leq \int_{x=4\tau}^{k} g(x)dx = G_{\tau}(k) \label{eq:h_side_4} \ .
\end{equation}
Then, the summation in \eqref{equ:G_bound_e:final} is upper bounded by
\begin{align}
    &\hspace{-0.1 in}\sum_{n=4\tau+1}^{N_i(T)+4\tau} g(n) +  \sum_{n=4\tau+1}^{N_i^*(T)+4\tau} g(n) \notag \\ &
    \hspace{0.5 in}\leq G_{\tau}(N_i(t)+4\tau) + G_{\tau}(N_i^*(t)+4\tau) \ . \label{eq:h_side_5}
\end{align}
Since $g(x)$ is positive and decreasing, and $G(y)$ is defined as an integral of the $g$ function with a positive first derivative and negative second derivative, it can be deduced that $G$ is a concave function. Thus, we have
\begin{align}
    &\hspace{-0.3 in}G_{\tau}(N_i(t)+4\tau) + G_{\tau}(N_i^*(t)+4\tau)\notag \\
    &\leq 2 G_{\tau}\left(\frac{T}{2}+4\tau \right) \ . \label{eq:h_side_6}
\end{align}
Next, we proceed to establish an upper bound for the function $G$, which can be upper bounded as
\begin{align}
    &G_{\tau}\left(\frac{T}{2}+4\tau\right) = \int_{x=4\tau}^{\frac{T}{2}+4\tau} g(x) {\rm d}x \\
    &\leq \int_{x=4\tau}^{\frac{T}{2}+4\tau} \sqrt{\frac{m^2\kappa_{\max}}{\kappa_{\min}}} \frac{1}{\sqrt{x\kappa_{\min}}-\sqrt{\tau\kappa_{\min}}}{\rm d}x \label{equ:G_bound:mid}\nonumber\\ 
    &\quad+ \int_{x=4\tau}^{\frac{T}{2}+4\tau}
    \sqrt{\alpha m^2(1+\frac{m\kappa_{\max}}{\kappa_{\min}})}\frac{x^{1/4}}{ x\kappa_{\min}/m- \alpha m^2\sqrt{x}} {\rm d}x \nonumber\\
    &\quad + \int_{x=4\tau}^{\frac{T}{2}+4\tau} \frac{C }{ x\kappa_{\min}/m- \alpha m^2\sqrt{x}} {\rm d}x \\ 
    &\leq 2\frac{\sqrt{m \kappa_{\max}}}{\kappa_{\min}}\left(\sqrt{\frac{T}{2}}+\sqrt{\tau} \log\left(\sqrt{\frac{T}{2}} + \sqrt{\tau}\right)\right)\nonumber\\
    &\quad+ \frac{4}{\kappa_{\min}}\sqrt[4]{\frac{T}{2}} + 2\sqrt{\frac{\alpha m^5}{\kappa_{\min}^3}} \log \left(\frac{\sqrt{\frac{1}{\tau}}\sqrt[4]{\frac{T}{2}}+\sqrt[4]{4}+1}{\sqrt{\frac{1}{\tau}}\sqrt[4]{\frac{T}{2}}+\sqrt[4]{4}-1}\right)\nonumber\\
    &\quad + \frac{2m\log\left(\frac{\kappa_{\min}}{m}\sqrt{\frac{T}{2}}+\alpha m^2\right)}{\kappa_{\min}} C  \ . \label{equ:G_bound_ec:final} 
\end{align}
where \eqref{equ:G_bound:mid} is due to the inequality $\sqrt{x+y}\leq \sqrt{x} + \sqrt{y}$ and $\sqrt{x-y} \geq \sqrt{x}-\frac{y}{\sqrt{x}}$ when $x\geq y$, and we use closed-form integral and discard positive terms in  \eqref{equ:G_bound_ec:final}. Combining the results in \eqref{eq:error_event_bound_RW}, \eqref{mid:1},  and \eqref{equ:G_bound_ec:final}, let $E_1$ denote the accumulation of terms that exhibit at most logarithmic growth rates with respect to $T$ and $C $.
\begin{align}
    E_1&= 4\frac{\sqrt{m \kappa_{\max}}}{\kappa_{\min}} \sqrt{\tau} \log\left(\sqrt{\frac{T}{2}} + \sqrt{\tau}\right) \\
    &\quad + 4\sqrt{\frac{\alpha m^5}{\kappa_{\min}^3}} \log \left(\frac{\sqrt{\frac{1}{\tau}}\sqrt[4]{\frac{T}{2}}+\sqrt[4]{4}+1}{\sqrt{\frac{1}{\tau}}\sqrt[4]{\frac{T}{2}}+\sqrt[4]{4}-1}\right)\\
    &\quad + 8 \tau \left(\frac{1}{C }\sqrt{\kappa_{\max}\tau+\alpha m^2 \sqrt{\tau}} +1\right) \\
    & \quad + \frac{m}{C T} + \frac{2m}{3C } +   1 \ .
\end{align}
Therefore, the final result for the bound is
\begin{align}
    \E\left[\sum_{t=1}^T  \lambda_i(t) 
      \right] &\leq \frac{4\sqrt{m \kappa_{\max}}}{\kappa_{\min}}\sqrt{T} + \frac{8}{\kappa_{\min}}\sqrt[4]{\frac{3T}{2}}  +E_1  \\
      & + \frac{4m}{\kappa_{\min}}\log\left(\frac{\kappa_{\min}}{m}\sqrt{\frac{T}{2}}+\alpha m^2\right) C \ .
      \label{eq:accumulation_E1}
\end{align}

Plugging~\eqref{eq:accumulation_E1} into~\eqref{base_mid_2}, we have
\begin{equation}
    \sum_{t=1}^{T}\E_{a(t)}\left| \widetilde X_{i}(t) - X_{i}(t)\right|  \leq 2m \beta_T \mcB \ .
\end{equation}

\textbf{Induction Step.} Assume that the property holds for causal depths up to $L_i=k$. We show that it will also satisfied for $L_i=k+1$. For this purpose, we start with the following expansion and apply the triangular inequality to find an upper bound for it.

\begin{align}
& \sum_{t=1}^{T}\E_{a(t)} \left|\widetilde X_{i}(t) - X_{i}(t)] \right| \\
\nonumber& = \sum_{t=1}^{T}  \E_{a(t)} \Big|  [\widetilde \bB_{a(t)}(t)]^{\top}_i \widetilde X_{\Pa(i)}(t) \\
& \qquad -  [\bB_{a(t)}]^{\top}_i X_{\Pa(i)}(t) \Big|\\
&=\nonumber \sum_{t=1}^{T} \E_{a(t)}  \Big| [\widetilde \bB_{a(t)}(t)]^{\top}_i \Big( \widetilde X_{\Pa(i)}(t) - X_{\Pa(i)}(t) \Big)\Big |\\
\label{equ:mid_of_lemma1} & \quad + \sum_{t=1}^{T} \E_{a(t)} \Big| \Big([\widetilde \bB_{a(t)}(t)]_i -  [ \bB_{a(t)}]_i\Big)^{\top} X_{\Pa(i)}(t) \Big|\\
&\leq \sum_{t=1}^{T}  \E_{a(t)} \norm{ \widetilde X_{\Pa(i)}(t) - X_{\Pa(i)}(t)} + 2m\beta_T \mcB \ \label{equ:mid_of_lemma} \\
    & \leq \sum_{j\in\Pa(i)} \sum_{t=1}^{T}  \E_{a(t)} \left|\widetilde X_{\Pa(i)}(t) - X_{\Pa(i)}(t)\right| \\
&\quad + 2m\beta_T \mcB \ .\label{equ:lemmasum2_mid}
\end{align}

where the transition to \eqref{equ:mid_of_lemma1} holds due to the triangular inequality via adding and subtracting terms $[\widetilde \bB_{a(t)}(t)]^{\top} X_{\Pa(i)}(t)$; \eqref{equ:mid_of_lemma} holds due to $\big\|\widetilde \bB_{a(t)}(t)\big\|\leq 1$ and similar proof as in the base step; and \eqref{equ:lemmasum2_mid} holds since the triangle inequality of $L_2$ norm.

Next, we find an upper bound on the first summand in \eqref{equ:lemmasum2_mid}. We notice that the summation is taken over all parents of node $i$, thus, we aim to find an upper bound for the error bound for each parent. Based on the induction assumption, for each node $j\in\Pa(i)$, we have 
\begin{align}
    \sum_{t=1}^{T}\E_{a(t)} \left|\widetilde X_{j}(t)- X_{j}(t)\right|
    & \;  \leq \; 2m\beta_T\mcB \sum_{\ell=1}^{L_j}  d^{\ell-1} \\
    &     \label{equ:inductioncondition} \;  \leq \; 2m\beta_T \mcB \sum_{\ell=1}^{k} d^{\ell-1} \ ,
\end{align}
where \eqref{equ:inductioncondition} holds since the causal depth of parent nodes is less than that of node $i$, i.e., $L_j\leq L_i-1 = k$ for all $j\in\Pa(i)$. Subsequently, plugging \eqref{equ:inductioncondition} into \eqref{equ:lemmasum2_mid}, we obtain
\begin{align}
    & \sum_{t=1}^{T} \sum_{j\in\Pa(i)} \E_{a(t)} \left|\widetilde X_{j}(t) - X_{j}(t)\right|\\
    &\quad\leq \; \sum_{j\in\Pa(i)} 2m\beta_T \mcB  \sum_{\ell=1}^{k}  d^{\ell-1}\\
    &\quad \leq \;  d \; 2m\beta_T \mcB  \sum_{\ell=1}^{k} d^{\ell-1}  \\
    &\quad \leq \; 2m\beta_T \mcB\sum_{\ell=1}^{k}  d^{\ell} \label{equ:lemma5.2sum2}\ ,
\end{align}
Combining  the results in~\eqref{equ:lemmasum2_mid}
and~\eqref{equ:lemma5.2sum2}, we conclude
\begin{equation}
\sum_{t=1}^{T} \E_{a(t)}\left|\widetilde X_i- X_i\right|   \leq 2m\beta_T \mcB  \sum_{\ell=1}^{k+1} d^{\ell-1} \ ,
\end{equation}
which proved the desired results.
\end{proof}

\section{Proof of Theorem~\ref{thm:measure2}}
\label{proof:thm:measure2}
We start by setting $\delta=\frac{1}{2NT}$ and defining the event in which over $T$ rounds, all confidence sets $\mcC_{i,t}$ contain the ground truth parameters $f_i$. Specifically, 
Now define the error events $\mcE_{i}$ and $ \mcE^{*}_{i}$ for $i\in[N]$ for each estimator 
\begin{align}
    \mcE_{i} &\triangleq \biggl\{ \forall t \in [T] : [\bB]_{i} \in \mcC_{i,t} \biggr\}  \ ,  \\
   \mcE^{*}_{i} &\triangleq \biggl\{ \forall t \in [T] : [\bB^{*}]_{i}\in \mcC^*_{i,t} \biggr\}  \ ,
\end{align}
where the $\beta(t,\delta)$ is chosen as
\begin{align}
&\beta\Big(t,\frac{1}{2NT}\Big) \\
&\hspace{0.04 in}\triangleq \sqrt{2\log\left(2NT\right)+d\log\left(1+m^2t/dC^2\right)} + 1 + m\ ,
\end{align}
Accordingly, define the event
\begin{equation}
    \mcE \triangleq \bigg(\bigcap_{i=1}^N  \mcE_{i} \bigg) \bigcap \bigg(\bigcap_{i=1}^N  \mcE^{*}_{i} \bigg)  \ .  \label{eq:ucb_conf_interval_event_union_RW}
\end{equation}
By invoking the union bound on probability and Lemma~\ref{lem:beta(t)indeviation}, we have
\begin{equation}
    \P(\mcE^{\C}) \;  \leq \; \sum_{i=1}^{N} \Big(\P(\mcE^{\C}_i) + \P(\mcE^{* \C}_i) \Big) \; \leq \; \sum_{i=1}^{N} \frac{1}{NT} \; = \; \frac{1}{T} \ .
\end{equation}
Next, we decompose the regret defined in \eqref{equ:regret}  under the events $\mcE$ and $\mcE^{\C}$.
\begin{align}
    &\E[R(T)] =  \sum_{t=1}^{T} \E_{a^*}X_{N}(t) - \E_{a(t)}X_{N}(t) \\
    \nonumber & \quad =  \sum_{t=1}^{T} \E \bigg[ \mathbbm{1}\{\mcE^{\rm c}\} \underset{\leq 2m}{\underbrace{\Big(\E_{a^*}[X_{N}(t)] - \E_{a(t)}[X_{N}(t)]\Big)}}\bigg]\\
    & \qquad  + \sum_{t=1}^{T}  \E \bigg[ \mathbbm{1}\{\mcE\}\;\Big(\underset{\leq \operatorname{UCB}_{a^*}(t)}{\underbrace{\E_{a^*}[X_{N}(t)]}} - \E_{a(t)}[X_{N}(t)]\Big)\bigg]\\
    &\quad  \leq 2m \sum_{t=1}^{T} \underset{\leq \frac{1}{T}}{\underbrace{\P (\mcE^{\rm c})}}
      + \sum_{t=1}^{T}  \E \bigg[ \operatorname{UCB}_{a(t)} - \E_{a(t)} [X_{N}(t))]\bigg]\\
    & \quad \leq 2m + \sum_{t=1}^{T} \E \bigg[ \E_{a(t)}[\widetilde X_{N}(t)] - \E_{a(t)}[X_{N}(t) ]\bigg]\label{eq:upperbound_plugin}\ ,
\end{align}
where we have used the set of inequalities 
\begin{align}
\E_{a^*}[X_N(t)] &\leq \operatorname{UCB}_{a^*}(t) \\
&\leq \operatorname{UCB}_{a(t)}(t) \leq \E_{a(t)}[\widetilde X_N(t)] \ .
\end{align}
Then using the lemma~\ref{lem:estimation error}, we have 
\begin{align}
    \E[R(T)]  &\leq 2m  +2 m \beta(T,\frac{1}{2NT}) \mcB \sum_{\ell=1}^{L} d^{\ell-1} \\
    & = 2m + \tilde \mcO\Big(d^{L-\frac{1}{2}} (\sqrt{T}+C)\Big)\label{eq:boundregret_1} \ ,
\end{align}
where the~\eqref{eq:boundregret_1} is due to the fact that for $d\geq 2$ we have
\begin{equation}
    \sum_{\ell=1}^{L} d^{\ell-1}=\frac{d^{L}-1}{d-1}   \leq 2d^{L-1} = \tilde{\mcO} (d^{L-1})\ .
\end{equation}

\end{document}